\def\eqref#1{equation~\ref{#1}}
\def\1{\bm{1}}
\DeclareMathAlphabet{\mathsfit}{\encodingdefault}{\sfdefault}{m}{sl}
\SetMathAlphabet{\mathsfit}{bold}{\encodingdefault}{\sfdefault}{bx}{n}
\newcommand{\te}[1]{{#1}}
\newcommand{\tr}[1]{{{#1}'}}
\newcommand{\idx}[2]{{#1}_{{#2}}}
\newcommand{\mybold}[1]{\textbf{#1}}
\newcommand{\mbf}[1]{{#1}}
\newcommand\mydots{\hbox to 1em{.\hss.\hss.}}
\newcommand{\spacer}{=}
\newcommand{\shorteq}{%
  \settowidth{\@tempdima}{\spacer}
  \resizebox{\@tempdima}{\height}{=}%
}
\newcommand{\shortneq}{%
  \settowidth{\@tempdima}{\spacer}
  \resizebox{\@tempdima}{\height}{\neq}%
}
\newcommand{\shortsub}{%
  \settowidth{\@tempdima}{\spacer}
  \resizebox{\@tempdima}{\height}{-}%
}
\definecolor{red}{RGB}{215,48,39}
\definecolor{green}{RGB}{26,152,80}
\definecolor{lightgray}{gray}{0.96}
\definecolor{blue}{RGB}{30, 144, 255}
\theoremstyle{definition}
\newtheorem{lemma}{Lemma}
\title{Error Discovery By Clustering Influence Embeddings}
\author{%
Fulton Wang\thanks{The two authors contributed equally. Correspondence to fultonwang@meta.com} \\
Meta\\
   \And Julius Adebayo$^*$\\
   Prescient Design / Genentech\\
   \And Sarah Tan \\
   Cornell University \\
   \And Diego Garcia-Olano \\
   Meta\\
   \And Narine Kokhlikyan\\
   Meta\\
}
\begin{document}

\maketitle

\begin{abstract}
We present a method for identifying groups of test examples---slices---on which a model under-performs, a task now known as \textit{slice discovery}. We formalize \textit{coherence}---a requirement that erroneous predictions, within a slice, should be \textit{wrong for the same reason}---as a key property that any slice discovery method should satisfy.  We then use influence functions to derive a new slice discovery method, \texttt{InfEmbed}, which satisfies coherence by returning slices whose examples are influenced similarly by the training data.  \texttt{InfEmbed} is simple, and consists of applying K-Means clustering to a novel representation we deem \emph{influence embeddings}. We show \texttt{InfEmbed} outperforms current state-of-the-art methods on 2 benchmarks, and is effective for model debugging across several case studies.\footnote{Code to replicate our findings is available
at: \url{https://github.com/adebayoj/infembed}}
\end{abstract}

\section{Introduction}
Error analysis is a longstanding challenge in machine learning~\citep{pope1976statistics, amershi2015modeltracker, sculley2015hidden, chung2019automated, chakarov2016debugging, cadamuro2016debugging,  kearns2018preventing, kim2019multiaccuracy, zinkevichrules2020}. Recently, \citet{eyuboglu2022domino} formalized a type of error analysis termed the \emph{slice discovery problem}. In the \emph{slice discovery problem}, given a multi-class classification model and test data, the goal is to partition the \emph{test} data into a set of \emph{slices}---groups of test examples---by model performance.  An effective slice discovery method (SDM) should satisfy two desiderata:
\vspace{-2mm}
\begin{enumerate}
\itemsep0em
\item \emph{Error surfacing}: Identify \emph{under-performing slices}, i.e. slices with low accuracy, if they exist.
\item \emph{Coherence}: Return slices that are \emph{coherent} in the following sense: Erroneous predictions in a given slice should have the same \emph{root cause}, i.e. be ``wrong for the same reason''.
\end{enumerate}
 To illustrate, suppose we have a pre-trained model that detects hip fractures from an x-ray, trained on data from one hospital, that is now deployed in a new hospital.  The model might err frequently on two groups: 1) examples containing a spurious signal like the frequency signature of the scanner from the first hospital \citep{badgeley2019deep}, and 2) examples from patients not represented in the new hospital.  An effective SDM should uncover these two groups. The slice discovery problem has taken on renewed importance given that models trained via empirical risk minimization exhibit degraded performance on certain data groups---slices---due to distribution shift~\citep{koh2021wilds}, noisy labels~\citep{northcutt2021pervasive}, and imbalanced data~\citep{idrissi2022simple}. 

In developing a SDM, there are two key challenges. First, \emph{formalizing the coherence property}, i.e., defining the \emph{root cause} of an error, identifying whether two errors have similar root causes, and ultimately, defining whether a slice is \emph{coherent}. Second, \emph{identifying the representation to use}: returning to the x-ray example, there is typically no additional metadata that indicates whether an x-ray contains the spuriously correlated scanner pattern. The input to a SDM will typically be the x-ray or a representation thereof, without additional metadata that indicates which features are spurious. Thus, identifying a representation that can help distinguish under-performing groups from high performing ones while maintaining coherence is a key challenge.

\begin{figure*}[t]
\centering
\includegraphics[page=1, scale=0.69]{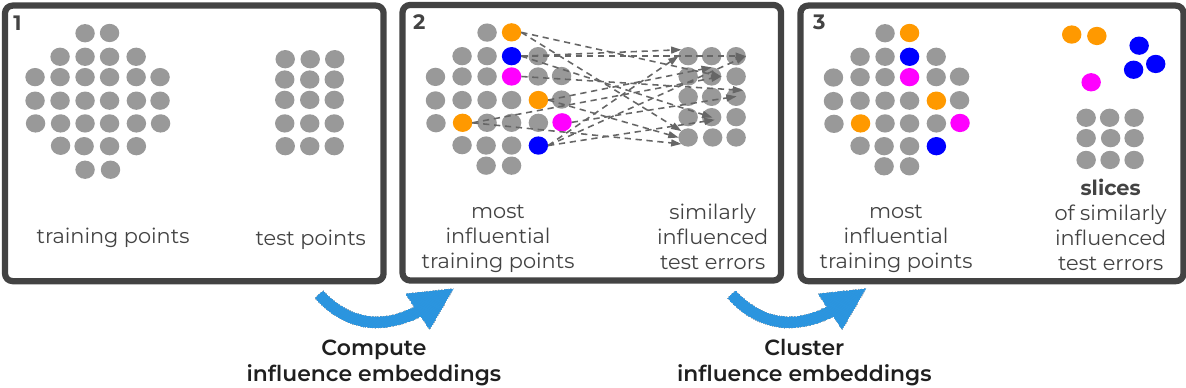}
\label{fig:infembed_high}
\caption{Schematic of the \texttt{InfEmbed} slice discovery method. In slice discovery, the goal is to partition the test points into groups---slices---by model performance. First, for each test point, we compute its influence embeddings---a low dimensional representation of the vector of influence scores of the training set. In the second stage, we cluster this influence embedding set to group points by training set influence.}
\vspace{-11pt}
\end{figure*}

\textbf{Influence Explanations}: To formalize the coherence property, we appeal to influence functions~\citep{koh2017understanding}, which tractably estimate the ``effect'' of a training example on a test example---how much the loss of the test example would change if the model were re-trained without that training example. We define the \emph{influence explanation} of a given test example to be the vector of the `influence' of each training example on the test example. Since a test example's influence explanation is the ``cause'' of its prediction from a training dataset perspective, we say a SDM achieves \emph{coherence} if the slices it returns have similar influence explanations.

\textbf{Influence Embeddings}: 
To create coherent slices, one might naively cluster test examples' \emph{influence explanations}.  
However, each influence explanation is high dimensional---equal to the size of the training dataset.  
Instead, we transform influence explanations into \emph{influence embeddings}. 
We define influence embeddings to be vectors with the \emph{dot-product property}: that the influence of a training example on a test example is the dot-product of their respective embeddings.

\textbf{Clustering Influence Embeddings}:
Given the coherence-inducing properties of influence embeddings, we propose \texttt{InfEmbed}, an SDM that applies K-means clustering to the influence embeddings of all test examples.
Despite its simplicity, $\texttt{InfEmbed}$ satisfies the two aforementioned desiderata.
First, it is error-surfacing, as mis-predicted examples that are ``wrong for the same reasons'', i.e. have similar influence explanations, tend to be grouped into the same slice. 
Second, $\texttt{InfEmbed}$ satisfies several desirable properties.
It produces coherent slices, and can accomodate user-defined choices regarding the subset of a model's representations to use, ranging from all to only the last layer~\citep{d2022spotlight, sohoni2020no}. 
In addition, the slices produced by $\texttt{InfEmbed}$ tend to have \emph{label homogeniety}, i.e. examples are homogeneous in terms of the true and predicted label. 
Unlike other SDMs, these properties are implicit consequences the rigorous influence function-based derivation.

 \textbf{Contributions}: We summarize our contributions as follows:
 \vspace{-3pt}
 \begin{enumerate}
     \item \textbf{New slice discovery method}: We propose $\texttt{InfEmbed}$, a SDM to identify under-performing groups of data that applies K-Means to a new representation: \emph{influence embeddings}.
     
     \item \textbf{Coherence \& Theoretical Justification.} We formalize the \textit{coherence} property, which requires that erroneous predictions within a slice should be ``wrong for the same reason''.  We then define influence embeddings, and prove that a procedure which clusters them achieves coherence: it returns slices whose examples are influenced similarly by the training data.

     \item \textbf{Empirical Performance.} We show that $\texttt{InfEmbed}$ outperforms previous state-of-the-art SDMs on 2 benchmarks---DCBench ~\citep{eyuboglu2022dcbench}, and SpotCheck~\citep{plumb2022evaluating}. $\texttt{InfEmbed}$ is scalable, able to recover known errors in variety of settings, and identifies coherent slices on realistic models.
 \end{enumerate}
\section{Background\label{sec:background}}
At a high level, the goal of the slice discovery problem is to partition a test dataset into slices--groups of data points---such that some slices are \emph{under-performing}, i.e. have low accuracy, and slices are \emph{coherent}, i.e. erroneous predictions in a given slice are ``wrong for the same reason''.  
We first formally define the slice discovery problem, and then overview influence functions, which we use to define what it means for a slice to be coherent - one of our key contributions, and which will be detailed in Section \ref{sec:influenceembeddings}.

\textbf{Slice Discovery Problem:} 
Given a trained \emph{multi-class} classification model $f(\boldsymbol{\cdot};\theta)$ over $C$ classes with model parameters $\theta$ and an example $x$ from some input space ${X}$, $f(x;\theta)\in \mathbb{R}^C$ is the prediction for the example and $f(x;\theta)_c$ is the pre-softmax prediction for class $c$.  Given a test dataset $\te{\mybold{Z}}\coloneqq[\idx{\te{z}}{1},\mydots,\idx{\te{z}}{N}]$ with $N$ examples  where $\idx{\te{z}}{i}\coloneqq (\idx{\te{x}}{i},\idx{\te{y}}{i})$ and $\idx{\te{x}}{i}\in\mathcal{X}$, $\idx{\te{y}}{i}\in \mathcal{Y}\coloneqq[0,1]^{C}$, the goal of the slice discovery problem is to partition the test dataset into $K$ slices: $\{\Phi_k\}_{k=1}^K$, where $\Phi_k \subseteq [N]$, $\Phi_k \cap \Phi_{k'}=\emptyset$ for $k\neq k'$, $\cup_{k=1}^K \Phi_k = [N].$  As we will make precise shortly, each slice $\Phi_k$ should be coherent, and some slices should be under-performing slices. The model is assumed to be trained on a training dataset with $\tr{N}$ examples, $\tr{\mybold{Z}}\coloneqq[\idx{\tr{z}}{1},\mydots,\idx{\tr{z}}{\tr{N}}]$ where $\idx{\tr{z}}{i}\coloneqq (\idx{\tr{x}}{i},\idx{\tr{y}}{i})$ with $\idx{\tr{x}}{i}\in\mathcal{X}, \idx{\tr{y}}{i}\in\mathcal{Y}$, so that $\theta = \operatorname{argmin}_{\theta'}\tfrac{1}{\tr{N}}\sum_{i\shorteq1}^{\tr{N}}L(\idx{\tr{z}}{i}; \theta')$. We assume the loss function $L$ is cross-entropy loss.  Thus, given test point $z=(x,y)$ with $x\in\mathcal{X}, y=[y_1,\mydots,y_C]\in \mathcal{Y}$, $L(z;\theta) = -\textstyle \sum_c y_c \log p_c$, with $p=[p_1,\mydots,p_C]$ and $p_c\coloneqq \tfrac{\exp(f(x;\theta)_c)}{\sum_{c'} \exp(f(x;\theta)_{c'}))}$, i.e. $p=\operatorname{softmax}(f(x;\theta))$.

\textbf{Influence Functions}: Influence functions estimate the \emph{effect} of a given training example,$z'$, on a test example, $z$, for a pre-trained model. 
Specifically, the influence function approximates the change in loss for a given test example $z$ when a given training example $z'$ is removed from the training data and the model is retrained.  
\citet{koh2017understanding} derive the aforementioned influence to be
$I(\tr{z},\te{z}) \coloneqq \nabla_{\theta} L(\tr{z};\theta)^{\intercal} H_{\theta}^{-1} \nabla_{\theta} L(\te{z};\theta),$ where $H_{\theta}$ is the loss Hessian for the pre-trained model: $H_{\theta}\coloneqq 1/n \sum_{i=1}^{n} \nabla^2_\theta L(z;\theta)$, evaluated at the pre-trained model's final parameter checkpoint. 
The loss Hessian is typically estimated with a random mini-batch of data.

The main challenge in computing influence is that it is impractical to explicitly form $H_{\theta}$ unless the model is small, or if one only considers parameters in a few layers. \citet{schioppa2022scaling} address this problem by forming a low-rank approximation of $H_{\theta}^{-1}$ via a procedure that does not explicitly form $H_{\theta}$. In brief, they run the Arnoldi iteration \citep{trefethen1997numerical} for $P$ iterations to get a $P$-dimensional Krylov subspace for $H_{\theta}$, which requires only $P$ Hessian-vector products.  Then, they find a rank-$D$ approximation of the restriction of $H_{\theta}$ to the Krylov subspace, a small $P\times P$ matrix, via eigendecomposition.  Their algorithm, $\operatorname{FactorHessian}$ (see Appendix \ref{appendix:lowinf} for details), which we use in our formulation, returns factors of a low-rank approximation of $H_{\theta}^{-1}$
\begin{align}
M, \lambda = \operatorname{FactorHessian}(\tr{\mybold{Z}}, \Theta, P, D)\ \text{where}\label{eq:factors}\\ \hat{H}_{\theta}^{-1} \coloneqq M\lambda^{-1}M^{\intercal} \approx H_{\theta}^{-1}, 
\label{eq:H_hat}
\end{align}
$M \in \mathbb{R}^{|\theta|\times D}$, $\lambda \in \mathbb{R}^{D\times D}$ is a \emph{diagonal} matrix, $|\theta|$ is the parameter count, $\hat{H}_{\theta}^{-1}$ approximates $H_{\theta}^{-1}$, $D$ is the rank of the approximation, $P$ is the Arnoldi dimension, i.e. number of Arnoldi iterations, and here and everywhere, \emph{configuration} $\Theta\coloneqq(L,f,\theta)$ denotes the loss function, model and parameters.  $\hat{H}_{\theta}^{-1}$ is then used to define the \emph{practical influence} of training example $\tr{z}$ on test \mbox{example $\te{z}$:}
\begin{align}
\hat{I}(\tr{z},\te{z}) &\coloneqq \nabla_{\theta} L(\tr{z};\theta)^{\intercal} \hat{H}_{\theta}^{-1} \nabla_{\theta} L(\te{z};\theta).\label{eq:influence_hat}
\end{align}

\section{Error Discovery By Clustering Influence Embeddings}
\label{sec:influenceembeddings}
We propose a slice discovery method (SDM) whose key contribution is to require returned slices have \emph{coherence}, and unlike past works, rigorously define coherence, which we do using influence functions.  In this section, we derive our SDM as follows: 1) We formally define that a slice has coherence if its examples have similar \emph{influence explanations}---which we define using influence functions, and leverage them to formalize a clustering problem.  2) Because influence explanations are high-dimensional, we derive \emph{influence embeddings}, a low-dimensional similarity-preserving approximation.  3) Using influence embeddings, we give a simple and efficient procedure to solve the clustering problem to give our SDM, \texttt{InfEmbed}. 4) We then describe an extension, \texttt{InfEmbed-Rule}, which instead of returning a partition with a user-specified number of slices, returns the largest slices satisfying user-specified rules.  5) We then show that the proposed formulation implicitly encourages a property, label homogeneity, that previous methods needed to explicitly encourage, in a manner that requires specifying a hard-to-tune hyperparameter.  6) Lastly, explain slices using training samples.

\subsection{Influence Explanations and Problem Formulation}
We seek to partition the test data into slices such that predictions for examples in the same slice have similar root causes as quantified by influence functions. Therefore, we propose to represent the root cause of the prediction for a test example, $\te{z}$, with its \emph{influence explanation $E(\te{z})$:}
\begin{align}
E(\te{z}) \coloneqq \{\hat{I}(\idx{\tr{z}}{j},z)\}_{j=1}^{\tr{N}},
\end{align} 
the vector containing the influence of every training example on the test example.  Thus, test examples with similar influence explanations are ``influenced similarly by the training data".  We then want each slice be \emph{coherent} in the following sense: the Euclidean distance between influence explanations of examples in the same slice should be low.

\begin{algorithm}

\begin{algorithmic}[1]
\caption{Algorithm for Discovering Problematic Slices}
\label{alg:basic}
\Procedure{GetEmbeddings}{$\te{\mybold{Z}},\tr{\mybold{Z}},\Theta,P, D$}
\State \textbf{Outputs}: \mbox{$D$-dimensional embeddings for test data $\te{\mybold{Z}}$}
\State $M,\lambda \gets \operatorname{FactorHessian}(\tr{\mybold{Z}}, \Theta, P, D)$\ \Comment{see Equation \ref{eq:H_hat}}
\State $\mu_i \gets \lambda^{-1/2} M^{\intercal} \nabla_\theta L(z_i;\theta)$ for $i=1,\mydots,N$ \Comment{see Equation \ref{eq:emb}}
\State $\boldsymbol{\mu} \gets [\mu_1,\mydots,\mu_N]$
\State \textbf{Return:} $\boldsymbol{\mu}$
\EndProcedure

\Procedure{InfEmbed}{K, $\te{\mybold{Z}},\tr{\mybold{Z}},\Theta,P, D$}
\State \textbf{Inputs}: number of slices $K$, training dataset $\tr{\mybold{Z}}$, test dataset $\te{\mybold{Z}}$, configuration $\Theta$, Arnoldi dimension $P$, influence embedding dimension $D$
\State \textbf{Outputs}: partition of test dataset into slices, $\Phi$
\State $\boldsymbol{\mu}\gets \operatorname{GetEmbeddings}(\te{\mybold{Z}},\tr{\mybold{Z}},\Theta,P, D)$
\State $[r_1,\mydots,r_{\te{N}}] \gets  \operatorname{K-Means}(\boldsymbol{\mu}, K)$
\Comment{compute cluster assignments for all $N$ examples}
\State \textbf{Return:} $\{\{i \in [N]: r_i=k\}\ \text{for}\ k\in[K]\}$ \Comment{convert cluster assignments to a partition}
\EndProcedure
\caption{Our SDM, \texttt{InfEmbed}, applies K-Means to influence embeddings of test examples.}
\end{algorithmic}

\end{algorithm}
\vspace{-3pt}

Therefore, we solve the slice discovery problem described in Section \ref{sec:background} via the following formulation---return a partition of the test dataset into slices, $\{\Phi_k\}_{k=1}^K$, that minimizes the total Euclidean distance between influence explanations of examples in the same slice. Formally, we seek:
\begin{align}
\hspace{-4pt}\operatorname{argmin}_{\{\Phi_k\}_{k=1}^K} \textstyle\sum_k \textstyle\sum_{i,i'\in \Phi_k} ||E(\idx{\te{z}}{i}) - E(\idx{\te{z}}{i'})||^2.\label{eq:formulation}
\end{align}

\subsection{Influence Embeddings to approximate Influence Explanations}
\label{sec:influence_embedding}
To form coherent slices, a naive approach would be to form the influence explanation of each test example, and apply K-Means clustering with Euclidean distance to them.  However, influence explanations are high-dimensional, with dimensionality equal to the size of the training data, $\tr{N}$.  Instead, we will use \emph{influence embeddings}: vectors such that the practical influence of a training example on a test example is the dot-product of their respective influence embeddings.  Looking at Equations \ref{eq:H_hat} and \ref{eq:influence_hat}, we see the influence embedding $\mu(z)$ of an example $z$ must be defined as
\begin{align}
\mu(z) \coloneqq \lambda^{-1/2} M^{\intercal}  \nabla_{\theta} L(z;\theta) \label{eq:emb}
\end{align}
and $M$, $\lambda$ are as defined in Section \ref{sec:background}, because for any training example $\tr{z}$ and test example $\te{z}$, $\hat{I}(\tr{z},\te{z})=\mu(\tr{z})^{\intercal}\mu(\te{z})$.  Procedure \texttt{GetEmbeddings} of Algorithm \ref{alg:basic} finds $D$-dimensional influence embeddings for the test dataset $\te{\mybold{Z}}$, which has runtime \emph{linear} in the Arnoldi dimension $P$.  The rank $D$ of the factors from $\texttt{FactorHessian}$ determines the dimension of the influence embeddings.

Influence embeddings satisfy a critical property---that if two examples have similar influence embeddings, they also tend to have similar influence explanations.  This is formalized by the following lemma, whose proof follows from the Cauchy-Schwartz inequality, and that influence is the dot-product of influence embeddings (see Appendix Section \ref{appendix:propoppo}) 

\textbf{Lemma 1:} There is a constant $C > 0$ such that for any test examples $\idx{\te{z}}{i},\idx{\te{z}}{j}$, $||E(\idx{\te{z}}{i}) - E(\idx{\te{z}}{j})||^2 \leq C||\mu(\idx{\te{z}}{i}) - \mu(\idx{\te{z}}{j}))||^2$.

\subsection{\texttt{InfEmbed}: Discovering Problematic Slices by Clustering Influence Embeddings}
We will solve the formulation of Equation \ref{eq:formulation} via a simple procedure which applies K-Means with Euclidean distance to the influence \emph{embeddings} of the test dataset, $\boldsymbol{\mu}\coloneqq[\mu(\te{z}_1),\mydots,\mu(\te{z}_{\te{N}})]$.  This procedure is justified as follows: Using Lemma 1,
given any partition $\{\Phi_k\}$, we know
\begin{align}
\textstyle\sum_k \textstyle\sum_{i,i'\in \Phi_k} ||E(\idx{\te{z}}{i}) - E(\idx{\te{z}}{i'})||^2 \leq C \textstyle\sum_k \textstyle\sum_{i,i'\in \Phi_k} ||\mu(\idx{\te{z}}{i}) - \mu(\idx{\te{z}}{i'})||^2\label{eq:rhs:main}.
\end{align}
However, the quantity in the right of Equation \ref{eq:rhs:main} is the \emph{surrogate} objective minimized by this procedure---it is the K-Means objective applied to influence embeddings, scaled by $C>0$ which does not matter.  
The quantity in the left of Equation \ref{eq:rhs:main} is the \emph{actual} objective which the formulation of Equation \ref{eq:formulation} minimizes.  
Therefore, the procedure is minimizing a surrogate objective which upper bounds the actual objective we care about.  
Put another way, by applying K-Means to influence embeddings, examples within the same slice will not only have similar influence embeddings, but also similar influence explanations as desired, by Lemma 1. 
Algorithm \ref{alg:basic} describes $\texttt{InfEmbed}$.
In addition, we find that normalizing the centroid centers in each iteration of K-Means is helpful, perhaps due to lessening the effect of outlier influence embeddings.
Overall, instead of K-means, other clustering algorithms can used as part of formulation.

\textbf{Computational Complexity}: We now take a step back to examine the computational complexity of the proposed procedure. The main computational bottleneck is the implicit Hessian estimation required to compute influence embeddings.~\citet{schioppa2022scaling}'s approach, which we rely on, has complexity $\mathcal{O}$($P$) where $P$ is the Arnoldi dimension. This is because each of the $P$ steps in the Arnoldi iteration requires computing a Hessian-vector product. 
Consequently, the complexity of computing influence embeddings is exactly the same as that of the influence function (IF) estimation. 
For the K-means portion, the complexity is $\mathcal{O}$(number of samples$\times$
number of k-means iterations $\times$
number of clusters). In practice, the clustering step is near instantaneous, so the method is dominated by implicit Hessian estimation step.

\subsection{\texttt{InfEmbed-Rule}: Finding Slices Satisfying a Rule}
\label{subsec:infembedrule}
The key hyper-parameter of the $\operatorname{InfEmbed}$ method is $K$, the number of slices to return.  In practice, it may not be intuitive for a user to choose $K$.  Instead, the user may want to know if there exist any coherent slices that are problematic, where `problematic' is pre-specified by a rule, e.g., \textit{a slice with greater than T samples and that has accuracy less than A}. Without specifying the rule, a practitioner has to iterate through all the returned slices to figure out which one is problematic. Therefore we propose a procedure that recursively clusters influence embeddings until slices satisfying a pre-specified rule are found, or until the slices are too small.  

The proposed approach, \texttt{InfEmbed-Rule},  is analogous to building a tree to identify slices satisfying the rule, where the splits are determined by K-Means clustering of influence embeddings.  In addition to letting the user specify more intuitive hyperparameters, this procedure also has the advantage that if a large slice with sufficiently low accuracy is found, it will not be clustered further.  Appendix Algorithm \ref{alg:rule} outlines this \texttt{InfEmbed-Rule} method.  Its inputs are the same as \texttt{InfEmbed}, except instead of $K$, 
one specifies accuracy threshold $A$, size threshold $S$, and branching factor $B$, which sets how many clusters the K-means call in each step of the recursion should return. It outputs a set of slices each with accuracy less than $A$ and size greater than $S$.

\subsection{Properties of Discovered Slices}
\label{sec:properties}
We now show that the slices discovered by \texttt{InfEmbed} possess properties similar to past SDM's.  To derive these properties, we consider what factors result in two examples $z=(x,y)$, $z'=(x',y')$ having influence embeddings with low Euclidean distance, so that K-Means would place them in the same slice. 
 To simplify analysis, we note that two examples would be placed in the same slice if their \emph{gradients} have high dot-product, because influence embeddings are linearly transformed gradients (see Equation \ref{eq:emb}) and Euclidean distance is equal to their negative dot-product plus a constant.  For further simplicity, we consider the case when the model $f$ is a linear model, i.e. when only gradients in the last fully-connected layer are considered when computing influence embeddings.
 Then, a straightforward computation shows that two examples $z,z'$ will be placed in the same slice if
\begin{align}
\nabla_{\theta}L(z;\theta)^{\intercal}\nabla_{\theta}L(z';\theta) = (y - p)^{\intercal}(y'-p') x^{\intercal}x',\label{eq:truth}
\end{align}
is high, where $L$ is cross-entropy loss, $\theta$ are the parameters, and for example $z$, $p\coloneqq[p_1,\mydots,p_C]=\operatorname{softmax}(f(x))$ is the predicted probabilities for each class, $y$ is the one-hot encoded label, $x$ is its last-layer representation, and $p',y',x'$ are defined analogously for example $z'$.

Looking at the $x^{\intercal}x'$ term of Equation \ref{eq:truth}, we see that two examples will be placed in the same slice if, all else being equal, their last-layer representations are similar as measured by the dot-product.  Past SDM's \citep{d2022spotlight,sohoni2020no} also use \emph{heuristic} similarity measures involving last-layer representations to decide if two examples should be in the same slice.  On the other hand, our influence function-based derivation shows that their dot-product is the ``canonical'' similarity measure to use.  Note that since influence embeddings consider gradients in layers beyond the last, they are able to consider information \emph{beyond} the last-layer representations, \emph{generalizing} past SDM's.

Looking at the $(y - p)^{\intercal}(y'-p')$ term of Equation \ref{eq:truth}, we see that two examples will be placed in the same slice if, all else being equal, their \emph{margins} are similar, as measured by dot-product.  The margin of an example, eg. $(y - p)$ or $(y'-p')$, is the vector containing the difference between the true label and predicted probability for each class.  Thus, examples with similar labels and predictions will tend to be in the same slice, i.e. slices tend to have \emph{label homogeniety}.  Past SDM's \citep{eyuboglu2022domino} also produce slices that tend to have label homogeniety, but do so through an ``error-aware'' mixture model that requires choosing a hard-to-tune tradeoff parameter.  On the other hand, our influence function-based derivation shows that the dot-product of margins is the ``canonical'' way to encourage label homogeniety, and does not require tuning a trade-off constant.  Note that label homogeniety is \emph{not} the only factor in consideration when forming slices.

\subsection{Slice Explanation via Slice Opponents}
\label{sec:method:proponents}
Given an under-performing slice, we are interested in identifying the root-cause of the erroneous predictions in it. We therefore compute the top-$k$ slice opponents and consider them as the root-cause. These are the $k$ training examples whose influence on the slice are most harmful, i.e. whose influence on the total loss over all examples in the slice is the most negative.

Formally, given a slice $\Phi$, we first compute the influence embeddings of the training data, $[\mu(z'_1),\dots,\mu(z'_{N'})]$.  Then, we compute the sum of the influence embeddings of the slice, $v\coloneqq \sum_{i\in\Phi}\mu({z_i})$.  Finally, due to the properties of influence embeddings, the top-$k$ slice opponents of the given slice, $O$, are the $k$ training examples $z'$ for which $v^{\intercal}\mu({z'})$ is the most negative, i.e. $O = \operatorname{argmin}_{O'\subseteq [N'], |O'|=k} v^{\intercal}(\sum_{i\in O'}\mu(z'_{i'}))$.  Inspecting slice opponents can provide insight into the key features responsible for low performance in a slice.

\section{Results} 
Here, we perform a quantitative evaluation of \texttt{InfEmbed} on the \texttt{dcbench} benchmark, and also perform several case studies showing \texttt{InfEmbed}'s usefulness. Please see Appendix Section \ref{sec:covid} on a case study applying \texttt{InfEmbed} to a small dataset where additional metadata can help explain slices.  For scalability, to compute influence embeddings, at times we will only consider gradients in some layers of the model when calculating the gradient $\nabla_{\theta}L(z;\theta)$ and inverse-Hessian factors $M, \lambda^{1/2}$ in Equation \ref{eq:emb}.  Furthermore, following \cite{schioppa2022scaling}, we do not compute those factors using the entire training data, i.e. we pass in a subset of the training data to \texttt{FactorHessian}.  For all experiments, we use Arnoldi dimention $P=500$, and influence embedding dimension $D=100$, unless noted otherwise. In the experiments that use \texttt{InfEmbed-Rule}, we used branching factor B=3. The rationale is that B should not be too large, to avoid unnecessarily dividing large slices with sufficient low accuracy into smaller slices. In practice, B=2 and B=3 did not give qualitatively different results.

\subsection{\texttt{InfEmbed} on \texttt{dcbench}}
\label{dcbench}
\textbf{Overview \& Experimental Procedure}: \texttt{dcbench} \citep{eyuboglu2022dcbench} provides 1235 pre-trained models that are derived from real-world data, where the training data is manipulated in 3 ways to lead to test-time errors. This gives 3 tasks - discovering slices whose errors are due to the following manipulations: 1) ``rare'' (examples in a given class are down-sampled), 2) ``correlation'' (a spurious correlation is introduced), and 3) ``noisy label'' (examples in a given class have noisy labels). Since the error causing slice is known a priori, \texttt{dcbench} can serve as a way to assess a new SDM. The underlying datasets for those tasks include 3 input types: natural images, medical images, and medical time-series.
~\citet{eyuboglu2022dcbench} also introduced Domino, the SDM that is currently the best-performing SDM on \texttt{dcbench}, which uses multi-modal CLIP embeddings as input to an error-aware mixture model. Following~\citet{eyuboglu2022dcbench}, we compare \texttt{InfEmbed} to Domino using the precision-at-k measure, which measures the proportion of the top k ($k=10$) elements in the discovered slice that are in the ground truth slice. We evaluate in the setting where errors are due to a trained model.

\begin{wraptable}{r}{7.0cm}
\centering
\resizebox{.95\linewidth}{!}{
\begin{tabular}{cccc}
\hline
 & \textbf{Rare} & \textbf{Correlation} & \textbf{Noisy Label}\\
\textbf{SDM Approach}   &  &  & \\
\hline
\multicolumn{4}{c}{\textbf{Natural Images}} \\
\hline
\hline
 Domino   & 0.4 &  0.45  & 0.6 \\
Inf-Embed &  0.65 &  0.55 & 0.67   \\
\hline
\multicolumn{4}{c}{\textbf{Medical Images}} \\
\hline
Domino    &  0.39  & 0.6 & 0.58\\
Inf-Embed & 0.57 &  0.62 & 0.73   \\
\hline
\multicolumn{4}{c}{\textbf{Medical Time Series}} \\
\hline
Domino    & 0.6   & 0.55 & 0.9\\
Inf-Embed &  0.64 & 0.65  & 0.81   \\
\hline
\end{tabular}
}
\caption{\texttt{InfEmbed} generally outperforms Domino across 3 input types and 3 tasks on the \texttt{dcbench} benchmark in terms of precision (higher is better).}
\label{tab:dominotrainedmodel}
\vspace{-15pt}
\end{wraptable}

\textbf{Results}: Table~\ref{tab:dominotrainedmodel} compares \texttt{InfEmbed} to Domino across the 3 tasks and 3 input types. \texttt{InfEmbed} always beats Domino except on the ``noisy'' task for the EEG Medical Time series input type.  We can also compute \textbf{coherence scores}---the k-means objective within each cluster---for each setting in the domino benchmark, 

We also compute the coherence scores, the K-means objective that we cluster, across all slices for influence embedding clustering as well as for domino. We find a similar correspondence to the previous results. In every setting where we outperform Domino (1219 out of 1235 trained settings), our coherence scores are better, in some cases, by almost 50 percent. Regarding label homogeneity: In high performing clusters, we find that influence embedding clustering has lower label homogeneity than domino. However, for error clusters we find the reverse. The goal of our scheme is to find clusters where the model is ‘wrong for the same reason’. We conjecture that influence embeddings are most effective for these settings.

\subsection{\texttt{InfEmbed} on the SpotCheck Benchmark}
\label{spotcheck}
\begin{wraptable}{r}{7.5cm}
\centering
\rowcolors{1}{}{lightgray}
\resizebox{.95\linewidth}{!}{
\begin{tabular}{ccc}
\hline
 & \textbf{DR} & \textbf{FDR} \\
\textbf{SDM Approach}   &  & \\
\hline
Barlow   & 0.43(0.04) &  0.03(0.01) \\
Spotlight &  0.79(0.03) &  0.09 (0.55)  \\
Domino    &  0.64 (0.04)  & 0.07(0.01)\\ 
PlaneSpot & 0.85 (0.03) & 0.07 (0.01) \\
Inf-Embed & 0.91 (0.09) &  0.15 (0.62) \\
\hline
\end{tabular}
}
\caption{Spotcheck Benchmark results.}
\label{tab:spotcheck}
\end{wraptable}

\textbf{Overview \& Experimental Procedure.} The SpotCheck benchmark~\cite{plumb2022evaluating} is based on a synthetic task consisting of 3 semantic features that can be easily controlled to determine the number of `blindspots' in a dataset. A blindspot is a feature responsible for a model's mistake. Similar to  \texttt{dcbench}, a model trained on data generated from SpotCheck is induced to make mistakes on an input that has a set of blindspots. The task is to predict the presence of a square in the image. Attributes of the input image such as the background, object color, and co-occurrence can be varied to induce mistakes in models obtained from the manipulated data generation process. ~\citet{plumb2022evaluating} find that current SDM approaches struggle in the presence of several blindspots. Consequently, they propose PlaneSpot, an SDM that uses the representation of the model's last layer, projected to 2 dimensions, along with the model's `confidence' as part of a mixture model to partition the dataset. They show that PlaneSpot outperforms current approaches as measured by discovery and false discovery rates. We replicate the SpotCheck benchmark, for a subset of features, and compare $\operatorname{InfEmbed}$ to PlaneSpot. We refer to the Appendix Section~\ref{appendix:sc:datasets} for additional details.

\textbf{Results.} We report the performance of the influence embedding approach compared to PlaneSpot on the Spotcheck benchmark in Table~\ref{tab:spotcheck}. We find that Inf-Embed approaches improves upon PlaneSpot's performance on the benchmark, which indicates that the proposed approach is an effective SDM.

\textbf{Alternative Clustering Approaches.} 
To check that the proposed scheme is robust to the choice of clustering algorithm, we replicate the spotcheck experiments with 2 additional clustering algorithms: 1) DBSCAN, and 2) Spectral Clustering. 
Similar to Table~\ref{tab:spotcheck}, we find that the discovery rate for these algorithms are: 0.92, and 0.89 respectively.
In both both cases, the performance of the $\operatorname{InfEmbed}$ procedure does not degrade, which indicates that the influence embedding representations are robust to the clustering algorithm used.

\subsection{Imagenet \& High Dimensional Settings}
\begin{figure*}[t]
\centering
\includegraphics[page=1, scale=0.20]{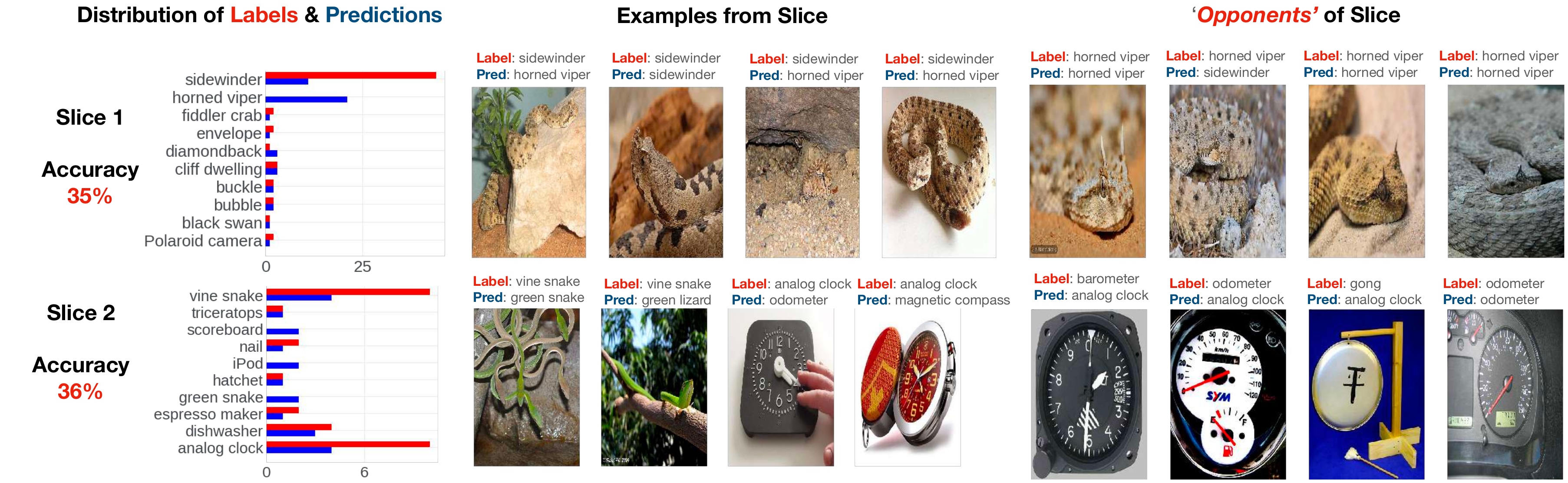}
\caption{\textbf{Overview of Influence embedding Clustering for a Resnet-18 on ImageNet.} We show two slices from applying the influence embedding approach to the ImageNet data. We apply the $\operatorname{RuleFind}$ algorithm and identify slices with accuracy under $40$ percent and atleast 25 samples. \textbf{Left}: On the left, we show the distribution of labels and model predictions in that slice (Labels are in red, and Prediction are blue). \textbf{Middle}: Representative examples from the slice. \textbf{Right}: We show four opponent examples for the identified slice.}
\vspace{-11pt}
\label{fig:imagenet}
\end{figure*}

\textbf{Overview \& Experimental Procedure:} Here, we identify problematic slices in a ``natural'' dataset - the test split of Imagenet \citep{deng2009imagenet}, containing 5000 examples. This image-classification problem is a 1000-class problem, and the analyzed model is a pre-trained Resnet-18 model achieving 69\% test accuracy. To compute influence embeddings, we consider gradients in the ``fc'' and ``layer4'' layers, which contain ~8.9M parameters.  We use \texttt{ImfEmbed-Rule} to find slices with at most 40\% accuracy, and at least 25 examples. To diagnose the root-cause of errors for a slice, we examine the slice's opponents, following Section \ref{sec:method:proponents}.

\textbf{Results:}  We find 25 slices, comprising a total of 954 examples, whose overall accuracy is 34.9\%.  Figure~\ref{fig:imagenet} displays 2 of these slices (See Section~\ref{appendix:ix} for additional examples). For each slice, we show the distribution of predicted and true labels, the 4 examples nearest the slice center in influence embedding space, and the 4 strongest opponents of the slice. First, we see that the slices have the label homogeneity property as explained in Section \ref{sec:properties}---the predicted and true labels typically come from a small number of classes.  In the first slice, the true label is mostly ``sidewinder``, and often predicted to be ``horned viper''.  The slice's strongest opponents are horned vipers, hard examples of another class, which the model considers to be similar to sidewinders. In the second slice, we observe \emph{two} dominant labels. Here, vine snakes and analog clocks are often mis-predicted.  A priori, one would not have guess the predictions, for these seemingly, unrelated classes would be wrong for the same reasons.  However, looking at the opponents, we see all of them have a clock hand, which turns out to be similar to a snake.  Also, because the opponents are of a variety of classes, the mis-predictions are also of a variety of classes. Overall, we demonstrate here that our procedure can identify problematic slices with low performance and then trace the cause of the errors to a subset of the training data.

\subsection{AGNews \& Mis-labelled Data}

\textbf{Overview \& Experimental Procedure}:  We identify problematic slices in another ``natural'' dataset - the test split of AGNews \citep{NIPS2015_250cf8b5}, comprising 7600 examples.  This text-classification problem is a 4-class problem, and the pre-trained model is a BERT base model fine-tuned on the training dataset \footnote{\url{https://huggingface.co/fabriceyhc/bert-base-uncased-ag_news}}, achieving 93\% test accuracy, making 475 errors.  We use \texttt{ImfEmbed} with $K=25$, and examine slices with at most 10\% accuracy, and at least 10 examples.

\textbf{Results}: We find 9 such slices, comprising a total of 452 examples, whose overall accuracy is only 3\%.  Table \ref{tab:agnewst_short} shows 2 slices, both of which have label homogeniety.  In the first slice, the model achieves 0\% accuracy, systematically mis-predicting ``business'' examples to be ``world'', perhaps predicting any text with a country name in it to be ``world''.  In the second slice, most labels are ``science/technology'', and most predictions are ``business''.  However looking at the examples, they may in fact be mis-labeled, as they are about businesses and their technology.  This shows that \texttt{InfEmbed} can also identify \emph{mis-labeled data}.

\begin{table*}[]
\vspace{-10pt}
\centering
\resizebox{\textwidth}{!}{\begin{tabular}{p{1.0\linewidth}} 
\hline
{Slice with 45 examples, 0\% accuracy.  Predictions: 100\% ``world'', labels: 100\% ``business''} \\
\hline
1. \emph{``British grocer Tesco sees group sales rise 12.0-percent. Britain's biggest supermarket chain ...''}\\
2. \emph{``Nigerian Senate approves \$1.5 bln claim on Shell LAGOS - Nigeria's Senate has passed ... ''}\\
3. \emph{``Cocoa farmers issue strike threat. Unions are threatening a general strike in the Ivory Coast ...''}\\
\\
\hline
{Slice with 139 examples, 3\% accuracy.  Predictions: 93\% ``business'', labels: 94\% ``sci/tech''}\\

\hline
1.\emph{``Google Unveils Desktop Search, Takes on Microsoft. Google Inc. rolled out a version of its ...''} \\
2.\emph{``PalmOne to play with Windows Mobile? Rumors of Treo's using a Microsoft operating system ...''}\\
3.\emph{``Intel Posts Higher Profit, Sales. Computer-chip maker Intel Corp. said yesterday that earnings ...''}
\end{tabular}
}
\caption{Representative examples for 2 slices obtained using \texttt{InfEmbed} on AGnews.  In the 1st slice, the model mis-predicts business text containing country names to be ``world''.  The 2nd slice appears to be mis-labelled, showing \texttt{InfEmbed} can detect mis-labeled data.}
\label{tab:agnewst_short}
\vspace*{-10pt}
\end{table*}

\subsection{Detecting Spurious Signals in Boneage Classification}\label{sec:boneage}
\textbf{Overview \& Experimental Procedure}: We now artificially inject a signal that spuriously correlates with the label in training data, and see if \texttt{InfEmbed-rule} can detect the resulting test errors as well as the spurious correlation. Here, the classification problem is bone-age classification \footnote{\url{https://www.kaggle.com/datasets/kmader/rsna-bone-age}} - to predict which of 5 age groups a person is in given their x-ray.  Following the experimental protocol of ~\citet{zhou2022feature} and ~\citet{adebayo2021post}, we consider 3 possible spurious signals: tag, stripe, or blur (see Appendix for additional details). For a given signal, we manipulate the bone-age training dataset by injecting the spurious signal into all \emph{training} examples from the ``mid-pub'' class (short for mid-puberty).  Importantly, we leave the test dataset untouched, so that we expect a model trained on the manipulated training data to err on ``mid-pub'' test examples.

This setup mimics a realistic scenario where x-ray training data comes from a hospital where one class is over-represented, and also contains a spurious signal (i.e. x-rays from the hospital might have the hospital's tag).  We then train a resnet50 on the manipulated training data, and apply \texttt{InfEmbed-Rule} on the untouched test data to discovered slices with at least 25 examples and at most 40\% accuracy (same as for the Imagenet analysis).  We repeat this manipulate-train-slice discovery procedure for 3 runs: once for each possible signal.

\textbf{Results}:  For all 3 runs / signals, \texttt{InfEmbed-Rule} returned 2 slices satisfying the rule.  From the distribution of labels and predictions, we observe that for each signal, the labels in the most problematic slice are of the ``mid-pub'' class, showing that \texttt{InfEmbed-Rule} is able to detect test errors due to the injected spurious training signal.  Furthermore, for each run, around 80\% of the slice's opponents are training examples with the spurious training signal, i.e. examples with the ``mid-pub'' label.  Together, this shows \texttt{InfEmbed-Rule} can not only detect slices that the model errs on because of a spurious training signal, but also identify training examples with the spurious training signal by using slice opponents.

\section{Related Work}
Error analysis, more generally, has been studied extensively even within the deep learning literature; however, here we focus our discussions on the recently formalized \textit{slice discovery problem}---as termed by \citet{eyuboglu2022domino}. 

\textbf{Error Analysis, Slice Discovery, Related Problems}:
To the best of our knowledge, \texttt{InfEmbed} is the first SDM based on influence functions.  More broadly, it represents the first use of influence functions for clustering and global explainability; they have been used for local explainability, i.e. returning the influence explanation for a \emph{single} test example, but not for clustering, as we do. \citet{pruthi2020estimating} also define influence embeddings, but use a different definition and also use it for a different purpose - accelerating the retrieval of an example's most influential training examples. Crucially, their embeddings were high-dimensional (exceeding the number of model parameters) and thus ill-suited for clustering, and lacked any dimension reduction procedure preserving the dot-product property, aside from random projections.

Previous SDMs use a variety of representations as input into a partitioning procedure.  Some rely on \emph{external} feature extractors, using PCA applied to pre-trained CLIP embeddings \citep{eyuboglu2022domino} along with an ``error-aware'' mixture model, or features from a robustly pre-trained image model \citep{singla2021understanding} along with a decision tree. This external reliance limits the generalizability of these SDMs---what if one wanted to perform slice discovery on a domain very different from the one CLIP was trained on, or wanted to consider a new modality, like audio or graphs? Other SDM's use the pre-trained model's last-layer representations \citep{d2022spotlight}, or transform them via SCVIS \citep{plumb2022evaluating} or UMAP \citep{sohoni2020no} before applying K-Means clustering.  While they are performant, it is less clear \emph{why} this is. In contrast, $\operatorname{InfEmbed}$ does not rely on external feature extractors and is theoretically framed by our influence function-based definition of coherency.

Other methods solve problems related to, but not the same as slice discovery. \citet{rajani2022seal, wiles2022discovering, hua2022discover} leverages the aforementioned SDM's to create \emph{interactive} systems for discovering slices, focusing on improving the user interface and how to explain slices.  In contrast, we focus on solving the core slice discovery problem. All these methods use last-layer representations along with K-Means or an ``error-aware'' mixture model \citep{hua2022discover}. \citet{jain2022distilling} globally \emph{ranks} examples to identify examples in a \emph{single} failure mode, but being unable to output multiple slices, is not a SDM. Numerous works also rank examples using scores capturing various properties: the degree to which a sample is out-of-distribution \citep{liu2020energy}, reliability \citep{schulam2019can}, or difficulty \cite{simsek2022understanding}.  Complementary to SDMs are methods for identifying low-performance subgroups when tabular metadata is available \citep{lakkaraju2017identifying}.
 
\textbf{Influence Functions}: Influence functions \citep{koh2017understanding} have been used to calculate influences, which can be used for ranking training examples for local explainability \citep{barshan2020relatif} for both supervised and unsupervised models \citep{kong2021understanding}, identifying mis-labeled data interactively \citep{teso2021interactive}, data augmentation \citep{lee2020learning}, active learning \citep{liu2021influence}, quantifying reliability \citep{schulam2019can}, and identifying mis-labeled data \citep{pruthi2020estimating}. However, influence functions have not been used for slice discovery or global explainability.
\section{Conclusion}
We present a method, \texttt{InfEmbed}, that addresses the slice discovery problem. Our proposed solution departs from previous work in that it both identifies slices---group of data points--- on which a model under-performs, and that satisfy a certain \textit{coherence} property. We formalize coherence---predictions being wrong for the same reasons within a slice---as a key property that all slice discovery methods should satisfy. In particular, we leverage influence functions \citep{koh2017understanding} to define coherent slices as ones whose examples are influenced similarly by the training data, and then develop a procedure to return coherent slices based on clustering a representation we call influence embeddings. 
We demonstrate on several benchmarks that \texttt{InfEmbed} out-performs current approaches on a variety of data modalities. In addition, we show through case studies that \texttt{InfEmbed} is able to recover known errors in a variety of settings, and identifies coherent slices on realistic models.

\textbf{Limitations.} One major limitation of the proposed approach is that the validation set needs to include the kind of error that one is seeking to detect. In addition, the proposed procedure suggests to inspect the slice opponents as the root cause of the wrong predictions in a slice. However, it is still up to the user of the proposed Algorithm to infer which feature(s) of the slice opponents is responsible for causing the errors. Despite these limitations, error discovery by clustering influence embeddings can be seen as additional tool in the ML model debugging toolbox.

\begin{ack}
Fulton Wang, Sarah Tan, Diego Garcia-Olano and Narine Kokhlikyan were employed by Meta, while Julius Adebayo was employed by Genentech over the course of work on this project. No external funding was received or used by the authors over the course of the project.
\end{ack}

\newpage
\bibliography{ref}

\begin{thebibliography}{51}
\providecommand{\natexlab}[1]{#1}
\providecommand{\url}[1]{\texttt{#1}}
\expandafter\ifx\csname urlstyle\endcsname\relax
  \providecommand{\doi}[1]{doi: #1}\else
  \providecommand{\doi}{doi: \begingroup \urlstyle{rm}\Url}\fi

\bibitem[Adebayo et~al.(2021)Adebayo, Muelly, Abelson, and
  Kim]{adebayo2021post}
Julius Adebayo, Michael Muelly, Harold Abelson, and Been Kim.
\newblock Post hoc explanations may be ineffective for detecting unknown
  spurious correlation.
\newblock In \emph{International Conference on Learning Representations}, 2021.

\bibitem[Alqaraawi et~al.(2020)Alqaraawi, Schuessler, Wei{\ss}, Costanza, and
  Berthouze]{alqaraawi2020evaluating}
Ahmed Alqaraawi, Martin Schuessler, Philipp Wei{\ss}, Enrico Costanza, and
  Nadia Berthouze.
\newblock Evaluating saliency map explanations for convolutional neural
  networks: a user study.
\newblock In \emph{Proceedings of the 25th International Conference on
  Intelligent User Interfaces}, pages 275--285, 2020.

\bibitem[Amershi et~al.(2015)Amershi, Chickering, Drucker, Lee, Simard, and
  Suh]{amershi2015modeltracker}
Saleema Amershi, Max Chickering, Steven~M Drucker, Bongshin Lee, Patrice
  Simard, and Jina Suh.
\newblock Modeltracker: Redesigning performance analysis tools for machine
  learning.
\newblock In \emph{Proceedings of the 33rd Annual ACM Conference on Human
  Factors in Computing Systems}, pages 337--346, 2015.

\bibitem[Badgeley et~al.(2019)Badgeley, Zech, Oakden-Rayner, Glicksberg, Liu,
  Gale, McConnell, Percha, Snyder, and Dudley]{badgeley2019deep}
Marcus~A Badgeley, John~R Zech, Luke Oakden-Rayner, Benjamin~S Glicksberg,
  Manway Liu, William Gale, Michael~V McConnell, Bethany Percha, Thomas~M
  Snyder, and Joel~T Dudley.
\newblock Deep learning predicts hip fracture using confounding patient and
  healthcare variables.
\newblock \emph{NPJ digital medicine}, 2\penalty0 (1):\penalty0 1--10, 2019.

\bibitem[Balagopalan et~al.(2022)Balagopalan, Zhang, Hamidieh, Hartvigsen,
  Rudzicz, and Ghassemi]{balagopalan2022road}
Aparna Balagopalan, Haoran Zhang, Kimia Hamidieh, Thomas Hartvigsen, Frank
  Rudzicz, and Marzyeh Ghassemi.
\newblock The road to explainability is paved with bias: Measuring the fairness
  of explanations.
\newblock \emph{arXiv preprint arXiv:2205.03295}, 2022.

\bibitem[Barshan et~al.(2020)Barshan, Brunet, and
  Dziugaite]{barshan2020relatif}
Elnaz Barshan, Marc-Etienne Brunet, and Gintare~Karolina Dziugaite.
\newblock Relatif: Identifying explanatory training samples via relative
  influence.
\newblock In \emph{International Conference on Artificial Intelligence and
  Statistics}, pages 1899--1909. PMLR, 2020.

\bibitem[Bolukbasi et~al.(2021)Bolukbasi, Pearce, Yuan, Coenen, Reif,
  Vi{\'e}gas, and Wattenberg]{bolukbasi2021interpretability}
Tolga Bolukbasi, Adam Pearce, Ann Yuan, Andy Coenen, Emily Reif, Fernanda
  Vi{\'e}gas, and Martin Wattenberg.
\newblock An interpretability illusion for bert.
\newblock \emph{arXiv preprint arXiv:2104.07143}, 2021.

\bibitem[Cadamuro et~al.(2016)Cadamuro, Gilad-Bachrach, and
  Zhu]{cadamuro2016debugging}
Gabriel Cadamuro, Ran Gilad-Bachrach, and Xiaojin Zhu.
\newblock Debugging machine learning models.
\newblock In \emph{ICML Workshop on Reliable Machine Learning in the Wild},
  volume 103, 2016.

\bibitem[Chakarov et~al.(2016)Chakarov, Nori, Rajamani, Sen, and
  Vijaykeerthy]{chakarov2016debugging}
Aleksandar Chakarov, Aditya Nori, Sriram Rajamani, Shayak Sen, and Deepak
  Vijaykeerthy.
\newblock Debugging machine learning tasks.
\newblock \emph{arXiv preprint arXiv:1603.07292}, 2016.

\bibitem[Chen et~al.(2021)Chen, Li, Kim, Plumb, and Talwalkar]{chen2021towards}
Valerie Chen, Jeffrey Li, Joon~Sik Kim, Gregory Plumb, and Ameet Talwalkar.
\newblock Towards connecting use cases and methods in interpretable machine
  learning.
\newblock \emph{arXiv preprint arXiv:2103.06254}, 2021.

\bibitem[Chung et~al.(2019)Chung, Kraska, Polyzotis, Tae, and
  Whang]{chung2019automated}
Yeounoh Chung, Tim Kraska, Neoklis Polyzotis, Ki~Hyun Tae, and Steven~Euijong
  Whang.
\newblock Automated data slicing for model validation: A big data-ai
  integration approach.
\newblock \emph{IEEE Transactions on Knowledge and Data Engineering},
  32\penalty0 (12):\penalty0 2284--2296, 2019.

\bibitem[Cohen et~al.(2020)Cohen, Morrison, Dao, Roth, Duong, and
  Ghassemi]{cohen2020covid}
Joseph~Paul Cohen, Paul Morrison, Lan Dao, Karsten Roth, Tim~Q Duong, and
  Marzyeh Ghassemi.
\newblock Covid-19 image data collection: Prospective predictions are the
  future.
\newblock \emph{Journal of Machine Learning for Biomedical Imaging}, 2020.

\bibitem[Deng et~al.(2009)Deng, Dong, Socher, Li, Li, and
  Fei-Fei]{deng2009imagenet}
Jia Deng, Wei Dong, Richard Socher, Li-Jia Li, Kai Li, and Li~Fei-Fei.
\newblock Imagenet: A large-scale hierarchical image database.
\newblock In \emph{2009 IEEE conference on computer vision and pattern
  recognition}, pages 248--255. Ieee, 2009.

\bibitem[d'Eon et~al.(2022)d'Eon, d'Eon, Wright, and
  Leyton-Brown]{d2022spotlight}
Greg d'Eon, Jason d'Eon, James~R Wright, and Kevin Leyton-Brown.
\newblock The spotlight: A general method for discovering systematic errors in
  deep learning models.
\newblock In \emph{2022 ACM Conference on Fairness, Accountability, and
  Transparency}, pages 1962--1981, 2022.

\bibitem[Eyuboglu et~al.(2022{\natexlab{a}})Eyuboglu, Karla{\v{s}}, R{\'e},
  Zhang, and Zou]{eyuboglu2022dcbench}
Sabri Eyuboglu, Bojan Karla{\v{s}}, Christopher R{\'e}, Ce~Zhang, and James
  Zou.
\newblock dcbench: a benchmark for data-centric ai systems.
\newblock In \emph{Proceedings of the Sixth Workshop on Data Management for
  End-To-End Machine Learning}, pages 1--4, 2022{\natexlab{a}}.

\bibitem[Eyuboglu et~al.(2022{\natexlab{b}})Eyuboglu, Varma, Saab, Delbrouck,
  Lee-Messer, Dunnmon, Zou, and R{\'e}]{eyuboglu2022domino}
Sabri Eyuboglu, Maya Varma, Khaled Saab, Jean-Benoit Delbrouck, Christopher
  Lee-Messer, Jared Dunnmon, James Zou, and Christopher R{\'e}.
\newblock Domino: Discovering systematic errors with cross-modal embeddings.
\newblock \emph{arXiv preprint arXiv:2203.14960}, 2022{\natexlab{b}}.

\bibitem[Ghassemi et~al.(2021)Ghassemi, Oakden-Rayner, and
  Beam]{ghassemi2021false}
Marzyeh Ghassemi, Luke Oakden-Rayner, and Andrew~L Beam.
\newblock The false hope of current approaches to explainable artificial
  intelligence in health care.
\newblock \emph{The Lancet Digital Health}, 3\penalty0 (11):\penalty0
  e745--e750, 2021.

\bibitem[Hua et~al.(2022)Hua, Jin, Song, Mi, Zhang, and Yu]{hua2022discover}
Wenyue Hua, Lifeng Jin, Linfeng Song, Haitao Mi, Yongfeng Zhang, and Dong Yu.
\newblock Discover, explanation, improvement: Automatic slice detection
  framework for natural language processing.
\newblock \emph{arXiv preprint arXiv:2211.04476}, 2022.

\bibitem[Idrissi et~al.(2022)Idrissi, Arjovsky, Pezeshki, and
  Lopez-Paz]{idrissi2022simple}
Badr~Youbi Idrissi, Martin Arjovsky, Mohammad Pezeshki, and David Lopez-Paz.
\newblock Simple data balancing achieves competitive worst-group-accuracy.
\newblock In \emph{Conference on Causal Learning and Reasoning}, pages
  336--351. PMLR, 2022.

\bibitem[Jain et~al.(2022)Jain, Lawrence, Moitra, and
  Madry]{jain2022distilling}
Saachi Jain, Hannah Lawrence, Ankur Moitra, and Aleksander Madry.
\newblock Distilling model failures as directions in latent space.
\newblock \emph{arXiv preprint arXiv:2206.14754}, 2022.

\bibitem[Kearns et~al.(2018)Kearns, Neel, Roth, and Wu]{kearns2018preventing}
Michael Kearns, Seth Neel, Aaron Roth, and Zhiwei~Steven Wu.
\newblock Preventing fairness gerrymandering: Auditing and learning for
  subgroup fairness.
\newblock In \emph{International Conference on Machine Learning}, pages
  2564--2572. PMLR, 2018.

\bibitem[Kim et~al.(2019)Kim, Ghorbani, and Zou]{kim2019multiaccuracy}
Michael~P Kim, Amirata Ghorbani, and James Zou.
\newblock Multiaccuracy: Black-box post-processing for fairness in
  classification.
\newblock In \emph{Proceedings of the 2019 AAAI/ACM Conference on AI, Ethics,
  and Society}, pages 247--254, 2019.

\bibitem[Koh and Liang(2017)]{koh2017understanding}
Pang~Wei Koh and Percy Liang.
\newblock Understanding black-box predictions via influence functions.
\newblock In \emph{International conference on machine learning}, pages
  1885--1894. PMLR, 2017.

\bibitem[Koh et~al.(2021)Koh, Sagawa, Marklund, Xie, Zhang, Balsubramani, Hu,
  Yasunaga, Phillips, Gao, et~al.]{koh2021wilds}
Pang~Wei Koh, Shiori Sagawa, Henrik Marklund, Sang~Michael Xie, Marvin Zhang,
  Akshay Balsubramani, Weihua Hu, Michihiro Yasunaga, Richard~Lanas Phillips,
  Irena Gao, et~al.
\newblock Wilds: A benchmark of in-the-wild distribution shifts.
\newblock In \emph{International Conference on Machine Learning}, pages
  5637--5664. PMLR, 2021.

\bibitem[Kong and Chaudhuri(2021)]{kong2021understanding}
Zhifeng Kong and Kamalika Chaudhuri.
\newblock Understanding instance-based interpretability of variational
  auto-encoders.
\newblock \emph{Advances in Neural Information Processing Systems},
  34:\penalty0 2400--2412, 2021.

\bibitem[Lakkaraju et~al.(2017)Lakkaraju, Kamar, Caruana, and
  Horvitz]{lakkaraju2017identifying}
Himabindu Lakkaraju, Ece Kamar, Rich Caruana, and Eric Horvitz.
\newblock Identifying unknown unknowns in the open world: Representations and
  policies for guided exploration.
\newblock In \emph{Thirty-first aaai conference on artificial intelligence},
  2017.

\bibitem[Lapuschkin et~al.(2019)Lapuschkin, W{\"a}ldchen, Binder, Montavon,
  Samek, and M{\"u}ller]{lapuschkin2019unmasking}
Sebastian Lapuschkin, Stephan W{\"a}ldchen, Alexander Binder, Gr{\'e}goire
  Montavon, Wojciech Samek, and Klaus-Robert M{\"u}ller.
\newblock Unmasking clever hans predictors and assessing what machines really
  learn.
\newblock \emph{Nature communications}, 10\penalty0 (1):\penalty0 1--8, 2019.

\bibitem[Lee et~al.(2020)Lee, Park, Pham, and Yoo]{lee2020learning}
Donghoon Lee, Hyunsin Park, Trung Pham, and Chang~D Yoo.
\newblock Learning augmentation network via influence functions.
\newblock In \emph{Proceedings of the IEEE/CVF Conference on Computer Vision
  and Pattern Recognition}, pages 10961--10970, 2020.

\bibitem[Liu et~al.(2020)Liu, Wang, Owens, and Li]{liu2020energy}
Weitang Liu, Xiaoyun Wang, John Owens, and Yixuan Li.
\newblock Energy-based out-of-distribution detection.
\newblock \emph{Advances in Neural Information Processing Systems},
  33:\penalty0 21464--21475, 2020.

\bibitem[Liu et~al.(2021)Liu, Ding, Zhong, Li, Dai, and He]{liu2021influence}
Zhuoming Liu, Hao Ding, Huaping Zhong, Weijia Li, Jifeng Dai, and Conghui He.
\newblock Influence selection for active learning.
\newblock In \emph{Proceedings of the IEEE/CVF International Conference on
  Computer Vision}, pages 9274--9283, 2021.

\bibitem[Northcutt et~al.(2021)Northcutt, Athalye, and
  Mueller]{northcutt2021pervasive}
Curtis~G Northcutt, Anish Athalye, and Jonas Mueller.
\newblock Pervasive label errors in test sets destabilize machine learning
  benchmarks.
\newblock \emph{arXiv preprint arXiv:2103.14749}, 2021.

\bibitem[Olah et~al.(2017)Olah, Mordvintsev, and Schubert]{olah2017feature}
Chris Olah, Alexander Mordvintsev, and Ludwig Schubert.
\newblock Feature visualization.
\newblock \emph{Distill}, 2\penalty0 (11):\penalty0 e7, 2017.

\bibitem[Plumb et~al.(2022)Plumb, Johnson, Cabrera, Ribeiro, and
  Talwalkar]{plumb2022evaluating}
Gregory Plumb, Nari Johnson, {\'A}ngel~Alexander Cabrera, Marco~Tulio Ribeiro,
  and Ameet Talwalkar.
\newblock Evaluating systemic error detection methods using synthetic images.
\newblock \emph{arXiv preprint arXiv:2207.04104}, 2022.

\bibitem[Pope(1976)]{pope1976statistics}
Allen~J Pope.
\newblock \emph{The statistics of residuals and the detection of outliers}.
\newblock US Department of Commerce, National Oceanic and Atmospheric
  Administration~…, 1976.

\bibitem[Poursabzi-Sangdeh et~al.(2018)Poursabzi-Sangdeh, Goldstein, Hofman,
  Vaughan, and Wallach]{poursabzi2018manipulating}
Forough Poursabzi-Sangdeh, Daniel~G Goldstein, Jake~M Hofman, Jennifer~Wortman
  Vaughan, and Hanna Wallach.
\newblock Manipulating and measuring model interpretability.
\newblock \emph{arXiv preprint arXiv:1802.07810}, 2018.

\bibitem[Pruthi et~al.(2020)Pruthi, Liu, Kale, and
  Sundararajan]{pruthi2020estimating}
Garima Pruthi, Frederick Liu, Satyen Kale, and Mukund Sundararajan.
\newblock Estimating training data influence by tracing gradient descent.
\newblock \emph{Advances in Neural Information Processing Systems},
  33:\penalty0 19920--19930, 2020.

\bibitem[Rajani et~al.(2022)Rajani, Liang, Chen, Mitchell, and
  Zou]{rajani2022seal}
Nazneen Rajani, Weixin Liang, Lingjiao Chen, Meg Mitchell, and James Zou.
\newblock Seal: Interactive tool for systematic error analysis and labeling.
\newblock \emph{arXiv preprint arXiv:2210.05839}, 2022.

\bibitem[Rieger et~al.(2020)Rieger, Singh, Murdoch, and
  Yu]{rieger2019interpretations}
Laura Rieger, Chandan Singh, W~James Murdoch, and Bin Yu.
\newblock Interpretations are useful: penalizing explanations to align neural
  networks with prior knowledge.
\newblock \emph{International Conference on Machine Learning}, 2020.

\bibitem[Schioppa et~al.(2022)Schioppa, Zablotskaia, Vilar, and
  Sokolov]{schioppa2022scaling}
Andrea Schioppa, Polina Zablotskaia, David Vilar, and Artem Sokolov.
\newblock Scaling up influence functions.
\newblock In \emph{Proceedings of the AAAI Conference on Artificial
  Intelligence}, volume~36, pages 8179--8186, 2022.

\bibitem[Schulam and Saria(2019)]{schulam2019can}
Peter Schulam and Suchi Saria.
\newblock Can you trust this prediction? auditing pointwise reliability after
  learning.
\newblock In \emph{The 22nd International Conference on Artificial Intelligence
  and Statistics}, pages 1022--1031. PMLR, 2019.

\bibitem[Sculley et~al.(2015)Sculley, Holt, Golovin, Davydov, Phillips, Ebner,
  Chaudhary, Young, Crespo, and Dennison]{sculley2015hidden}
David Sculley, Gary Holt, Daniel Golovin, Eugene Davydov, Todd Phillips,
  Dietmar Ebner, Vinay Chaudhary, Michael Young, Jean-Francois Crespo, and Dan
  Dennison.
\newblock Hidden technical debt in machine learning systems.
\newblock In \emph{Advances in neural information processing systems}, pages
  2503--2511, 2015.

\bibitem[Simsek et~al.(2022)Simsek, Hall, and Sagun]{simsek2022understanding}
Berfin Simsek, Melissa Hall, and Levent Sagun.
\newblock Understanding out-of-distribution accuracies through quantifying
  difficulty of test samples.
\newblock \emph{arXiv preprint arXiv:2203.15100}, 2022.

\bibitem[Singla et~al.(2021)Singla, Nushi, Shah, Kamar, and
  Horvitz]{singla2021understanding}
Sahil Singla, Besmira Nushi, Shital Shah, Ece Kamar, and Eric Horvitz.
\newblock Understanding failures of deep networks via robust feature
  extraction.
\newblock In \emph{Proceedings of the IEEE/CVF Conference on Computer Vision
  and Pattern Recognition}, pages 12853--12862, 2021.

\bibitem[Sohoni et~al.(2020)Sohoni, Dunnmon, Angus, Gu, and
  R{\'e}]{sohoni2020no}
Nimit Sohoni, Jared Dunnmon, Geoffrey Angus, Albert Gu, and Christopher R{\'e}.
\newblock No subclass left behind: Fine-grained robustness in coarse-grained
  classification problems.
\newblock \emph{Advances in Neural Information Processing Systems},
  33:\penalty0 19339--19352, 2020.

\bibitem[Teso et~al.(2021)Teso, Bontempelli, Giunchiglia, and
  Passerini]{teso2021interactive}
Stefano Teso, Andrea Bontempelli, Fausto Giunchiglia, and Andrea Passerini.
\newblock Interactive label cleaning with example-based explanations.
\newblock \emph{Advances in Neural Information Processing Systems},
  34:\penalty0 12966--12977, 2021.

\bibitem[Trefethen and Bau~III(1997)]{trefethen1997numerical}
Lloyd~N Trefethen and David Bau~III.
\newblock \emph{Numerical linear algebra}, volume~50.
\newblock Siam, 1997.

\bibitem[Wiles et~al.(2022)Wiles, Albuquerque, and Gowal]{wiles2022discovering}
Olivia Wiles, Isabela Albuquerque, and Sven Gowal.
\newblock Discovering bugs in vision models using off-the-shelf image
  generation and captioning.
\newblock \emph{arXiv preprint arXiv:2208.08831}, 2022.

\bibitem[Zhang et~al.(2015)Zhang, Zhao, and LeCun]{NIPS2015_250cf8b5}
Xiang Zhang, Junbo Zhao, and Yann LeCun.
\newblock Character-level convolutional networks for text classification.
\newblock In C.~Cortes, N.~Lawrence, D.~Lee, M.~Sugiyama, and R.~Garnett,
  editors, \emph{Advances in Neural Information Processing Systems}, volume~28.
  Curran Associates, Inc., 2015.
\newblock URL
  \url{https://proceedings.neurips.cc/paper/2015/file/250cf8b51c773f3f8dc8b4be867a9a02-Paper.pdf}.

\bibitem[Zhou et~al.(2022)Zhou, Booth, Ribeiro, and Shah]{zhou2022feature}
Yilun Zhou, Serena Booth, Marco~Tulio Ribeiro, and Julie Shah.
\newblock Do feature attribution methods correctly attribute features?
\newblock In \emph{Proceedings of the AAAI Conference on Artificial
  Intelligence}, volume~36, pages 9623--9633, 2022.

\bibitem[Zimmermann et~al.(2021)Zimmermann, Borowski, Geirhos, Bethge, Wallis,
  and Brendel]{zimmermann2021well}
Roland~S Zimmermann, Judy Borowski, Robert Geirhos, Matthias Bethge, Thomas
  Wallis, and Wieland Brendel.
\newblock How well do feature visualizations support causal understanding of
  cnn activations?
\newblock \emph{Advances in Neural Information Processing Systems},
  34:\penalty0 11730--11744, 2021.

\bibitem[{Zinkevich Martin}(2020)]{zinkevichrules2020}
{Zinkevich Martin}.
\newblock {Rules of Machine Learning: Best Practices for ML Engineering.}
\newblock \url{http://martin.zinkevich.org/rules_of_ml/rules_of_ml.pdf}, 2020.
\newblock Online; accessed 10 January 2020.

\end{thebibliography}
\bibliographystyle{plainnat}
\appendix
\newpage 
\onecolumn
\part{Appendix} 
\parttoc 

\section{Additional Related Work}
\label{appendix:relatedwork}
\textbf{Interpretability and Post-hoc Explanations.} Despite initial evidence that explanations might be useful for detecting that a model is reliant on spurious signals~\citep{lapuschkin2019unmasking, rieger2019interpretations}, a different line of work directly counters this evidence.~\citet{zimmermann2021well} showed that feature visualizations~\citep{olah2017feature} are not more effective than dataset examples at improving a human's understanding of the features that highly activate a DNN's intermediate neuron. Increasing evidence demonstrates that current post hoc explanation approaches might be ineffective for model debugging in practice~\citep{chen2021towards, alqaraawi2020evaluating, ghassemi2021false, balagopalan2022road, poursabzi2018manipulating, bolukbasi2021interpretability}. In a promising demonstration,~\citet{lapuschkin2019unmasking} apply a clustering procedure to the LRP saliency masks derived from a trained model. In the application, the clusters that emerge are able to separate groups of inputs where, presumably, the model relies on different features for its output decision. This work differs from that in a key way: ~\citet{lapuschkin2019unmasking} demonstration is to seek understanding of the model behavior and not to perform slice discovery. There is no reason why a low performing cluster should emerge from such clustering procedure.

\section{Practical Low-Rank Influence Function}\label{appendix:lowinf}
 The main challenge in computing influence is that it is impractical to explicitly form the $H_{\theta}$ needed to compute influence in 
 $I(\tr{z},\te{z}) \coloneqq \nabla_{\theta} L(\tr{z};\theta)^{\intercal} H_{\theta}^{-1} \nabla_{\theta} L(\te{z};\theta),$
 unless the model is small, or if one only considers parameters in a few layers. \citet{schioppa2022scaling} address this problem by forming a low-rank approximation of $H_{\theta}^{-1}$ without explicitly forming $H_{\theta}$. Their low-rank implementation first applied the Arnoldi iteration \cite{trefethen1997numerical} to compute an orthonormal basis $(q_1,\mydots,q_D)$ for the $P$-dimensional Arnoldi subspace $(b,H_{\theta}b,\mydots,H_{\theta}^{P-1})$, as well as the restriction $R$ of $H_\theta$ to the subspace, so that $H_{\theta}=QRQ^{\intercal}$, where $Q\coloneqq [q_1,\mydots,q_P] \in \mathbb{R}^{|\theta|\times P}$ and $R \in \mathbb{R}^{P\times P}$. They choose $P$ to be 200 in their experiments.  Crucially, it is computationally feasible to run the Arnoldi iteration even on large models, because it only requires access to $H_\theta$ through Hessian-vector products, not through its explicit formation.  Then, leveraging the fact that the Arnoldi subspace of a matrix tends to contain its top eigenvectors, they approximate $R\approx V\lambda V^{\intercal}$ and $R^{-1}\approx V\lambda^{-1} V^{\intercal}$, where $V=[v_1,\mydots,v_D]\in\mathbb{R}^{P \times D}$, $v_1,\mydots,v_D$ are the top $D$ eigenvectors of $R$, and $\lambda=\operatorname{diag}(\lambda_1,\mydots,\lambda_D)$, where $\lambda_1,\mydots,\lambda_D$ are the corresponding eigenvalues.  They choose $D$ to be around 50 in their experiments.  Crucially, it is computationally feasible to find the top eigenvectors and eigenvalues via eigen-decomposition on an explicitly-formed $R$, because $R$ is of size $P \times P$ with $P=200$.  Finally, they form a low-rank approximation of $H_{\theta}^{-1}$, 
\begin{align}
\hat{H}_{\theta}^{-1} &\coloneqq M\lambda^{-1}M^{\intercal},\ \text{where}\ M=QV\label{eq:H_hat_appendix}
\intertext{and use it to define the \emph{practical influence} of training example $\tr{z}$ on test example $\te{z}$:}
\hat{I}(\tr{z},\te{z}) &\coloneqq \nabla_{\theta} L(\tr{z};\theta)^{\intercal} \hat{H}_{\theta}^{-1} \nabla_{\theta} L(\te{z};\theta).\label{eq:influence_hat_appendix}
\end{align}
Algorithm \ref{alg:arnoldi_factor} outlines finding the factors needed for $\hat{H}_{\theta}^{-1}$.  Note that for brevity of notation, throughout we will let \emph{configuration} $\Theta\coloneqq(L,f,\theta)$ denote the loss function, model, and parameters.
\algnewcommand{\LeftComment}[1]{\Statex \(\triangleright\) #1}
\begin{algorithm}
\label{alg:arnoldi_factor}
\caption{Finding low-rank factors of Hessian}
\begin{algorithmic}
\Procedure{FactorHessian}{$\tr{\mybold{Z}}$, $\Theta$, $P$, $D$}
\State\textbf{Inputs:} training data $\tr{\mybold{Z}}$, configuration $\Theta$, Arnoldi dimension $P$, rank $D$
\State $H_\theta \gets \sum_{i=1}^{N'} \nabla_{\theta}^2 L(\idx{\tr{z}}{i};\theta)$ \Comment{implicitly define HVP}
\State Run Arnoldi iteration on $H_\theta$ for $P$ iterations to get $Q\in \mathbb{R}^{|\theta| \times P}, R\in \mathbb{R}^{P\times P}$
\State $V, \lambda \gets$ top-$D$ eigenvectors / values of $R$ via SVD
\State $M \gets QV$
\State \textbf{Return:} $M, \lambda$
\EndProcedure
\end{algorithmic}
\end{algorithm}

\subsection{Proof of Lemma 1}
\label{appendix:propoppo}
Influence embeddings satisfy a critical property - that if two examples have similar influence embeddings, they also tend to have similar influence explanations.  This is formalized by the following lemma:
\begin{lemma}
There exists a constant $C$ such that for any test examples $\idx{\te{z}}{i},\idx{\te{z}}{j}$, $||E(\idx{\te{z}}{i}) - E(\idx{\te{z}}{j})||^2 \leq C||\mu(\idx{\te{z}}{i}) - \mu(\idx{\te{z}}{j}))||^2$.
\end{lemma}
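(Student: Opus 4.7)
The plan is to exploit the single algebraic fact recalled just before the lemma statement: the dot-product property $\hat{I}(\tr{z},\te{z}) = \mu(\tr{z})^{\intercal}\mu(\te{z})$. Using this, each coordinate of the influence explanation becomes a linear functional of the test example's influence embedding, namely $E(\te{z})_k = \mu(\idx{\tr{z}}{k})^{\intercal}\mu(\te{z})$. Stacking the $\tr{N}$ training embeddings as rows of a matrix $A \in \R^{\tr{N}\times D}$ (depending only on the training data and the configuration, not on the test points), I can then write $E(\te{z}) = A\,\mu(\te{z})$ for every test example.

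With this linear representation in hand, the rest is immediate. First I would observe
\[
E(\idx{\te{z}}{i}) - E(\idx{\te{z}}{j}) = A\bigl(\mu(\idx{\te{z}}{i}) - \mu(\idx{\te{z}}{j})\bigr),
\]
so
\[
\|E(\idx{\te{z}}{i}) - E(\idx{\te{z}}{j})\|^2 \leq \|A\|_{\mathrm{op}}^2\,\|\mu(\idx{\te{z}}{i}) - \mu(\idx{\te{z}}{j})\|^2.
\]
Setting $C \defeq \|A\|_{\mathrm{op}}^2$ closes the lemma. For a more elementary proof, one can avoid operator norms by applying Cauchy--Schwartz coordinate-wise: for each $k$,
\[
\bigl(\mu(\idx{\tr{z}}{k})^{\intercal}(\mu(\idx{\te{z}}{i}) - \mu(\idx{\te{z}}{j}))\bigr)^2 \leq \|\mu(\idx{\tr{z}}{k})\|^2\,\|\mu(\idx{\te{z}}{i}) - \mu(\idx{\te{z}}{j})\|^2,
\]
and then summing over $k\in[\tr{N}]$ yields the lemma with the explicit constant $C = \sum_{k=1}^{\tr{N}} \|\mu(\idx{\tr{z}}{k})\|^2$, which is the Frobenius norm squared of $A$ and an upper bound on $\|A\|_{\mathrm{op}}^2$.

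There isn't really a hard step here; the content of the lemma is entirely captured by the fact that $\te{z} \mapsto E(\te{z})$ factors through $\mu$ as a fixed linear map. The one thing I would be careful about is making it clear that $C$ is independent of $i$ and $j$ (and indeed of any test example), depending only on the training data, the loss, the model, and the Arnoldi/rank parameters used to build the embeddings. I would also briefly note that the bound is tight when $\mu(\idx{\te{z}}{i}) - \mu(\idx{\te{z}}{j})$ aligns with the top right singular vector of $A$, which is why clustering in the embedding space is a sensible surrogate for clustering in the explanation space.
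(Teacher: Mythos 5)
Your proof is correct and your second, ``elementary'' argument is essentially identical to the paper's: apply Cauchy--Schwartz coordinate-wise to $\mu(\idx{\tr{z}}{n})^{\intercal}(\mu(\idx{\te{z}}{i})-\mu(\idx{\te{z}}{j}))$ and sum over training examples, arriving at the same constant $C=\sum_{n}\|\mu(\idx{\tr{z}}{n})\|^2$. Your observation that $C$ can be sharpened to $\|A\|_{\mathrm{op}}^2$ (the squared spectral norm rather than the squared Frobenius norm of the stacked training-embedding matrix) is a correct and slightly tighter refinement, though it does not change the substance of the lemma or its use downstream.
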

\begin{proof}
The proof follows from the Cauchy-Schwartz inequality, and the fact that influence is the dot-product of influence embeddings.
\begin{align*}
||E(\mbf{\idx{\te{z}}{i}}) - E(\mbf{\idx{\te{z}}{j}})||^2 &= \textstyle \sum_{n=1}^{N'} (\hat{I}(\mbf{\idx{\tr{z}}{n}},\idx{\te{z}}{i}) - \hat{I}(\idx{\tr{z}}{n},\idx{\te{z}}{j}))^2\\
&=\textstyle\sum_{n=1}^{N'} (\mu(\idx{\tr{z}}{n})^{\intercal}\mu(\idx{\te{z}}{i}) - \mu(\idx{\tr{z}}{n})^{\intercal}\mu(\idx{\te{z}}{j}))^2\\ 
&=\textstyle\sum_{n=1}^{N'} (\mu(\idx{\tr{z}}{n})^{\intercal}(\mu(\idx{\te{z}}{i}) - \mu(\idx{\te{z}}{j})))^2\\
&\leq\textstyle\sum_{n=1}^{N'} ||\mu(\idx{\tr{z}}{n})||^2||\mu(\idx{\te{z}}{i}) - \mu(\idx{\te{z}}{j})||^2\\
&\leq C ||\mu(\idx{\te{z}}{i}) - \mu(\idx{\te{z}}{j})||^2,\ \text{where}
\end{align*}
$C\coloneqq \sum_{n=1}^{N'} ||\mu(\idx{\tr{z}}{n})||^2$ does not depend on $i$ nor $j$.
\end{proof}



\subsection{Description of \texttt{InfEmbed-Rule}}\label{sec:infembedrule}
Note that the key hyperparameter of the $\operatorname{InfEmbed}$ method is $K$, the number of slices to return.  In practice, it may not be intuitive for a user to choose $K$.  Instead, the user may want to know if there exists any coherent slices that are problematic, as defined by satisfying a rule: has accuracy less than some threshold, and number of examples above some threshold.  Therefore, we also propose a procedure that recursively clusters influence embeddings until slices satisfying the rule are found, or until the slices are too small.  The approach is analogous to building a tree to identify slices satisfying the rule, where the splits are determined by K-Means clustering of influence embeddings.  In addition to letting the user specify more intuitive hyperparameters, this procedure also has the advantage that if a large slice with sufficiently low accuracy is found, it will not be clustered further.  Algorithm \ref{alg:basic} outlines this \texttt{InfEmbed-Rule} method.  Its inputs are the same as \texttt{InfEmbed}, except instead of specifying $K$, the number of slices, one specifies accuracy threshold $A$, size threshold $S$, and branching factor $B$, which specifies how many clusters the K-means call in each step of the recursion should return.  Its outputs is a set of slices, each having accuracy less than $A$ and size greater than $S$.  In practice, the recursion proceeds to a maximum depth.

\begin{algorithm}\label{alg:rulefind}
\label{alg:rule}
\caption{Procedure finding slices with low accuracy and large size}
\begin{algorithmic}
\Procedure{RuleFind}{$\boldsymbol{\mu}, A, S, B$}
\State\textbf{Inputs:} $\boldsymbol{\mu}$ a list of influence embeddings, accuracy threshold $A$, size threshold $S$, branching factor $B$
\State\textbf{Outputs:} a set of lists of influence embeddings.  Each set of influence embeddings corresponds to a slice with accuracy < $A$ and size > $S$
\State $acc \gets $ accuracy of examples represented in $\{\mu_i\}$
\State $size \gets $ number of examples represented in $\{\mu_i\}$
\If{$acc \leq A$ and $size \geq S$}
\State \textbf{Return}: $\{\boldsymbol{\mu}\}$
\EndIf
\If{$size < S$}
\State \textbf{Return:} $\{\}$
\EndIf
\State $\boldsymbol{r} \gets \operatorname{K-Means}(\boldsymbol{\mu}, B)$ \Comment{get cluster assignments}
\State $\boldsymbol{r}_k \gets \{\mu_i\}_{i:r_i=k}$ for $k \in [B]$ \Comment{embeddings for each cluster}
\State $F=\{\}$
\For{$k\in[B]$} \Comment{search within each cluster}
\State $F \gets F \cup\{\operatorname{RuleFind}(\{\mu_i\}_{i:r_i=b}, A, S, B)\}$
\EndFor
\State \textbf{Return:} $F$
\EndProcedure
\Procedure{InfEmbed-Rule}{$A, S, B, \te{\mybold{Z}},\tr{\mybold{Z}},\Theta,P, D$}
\State $\boldsymbol{\mu}\gets \operatorname{GetEmbeddings}(\te{\mybold{Z}},\tr{\mybold{Z}},\Theta,P, D)$
\State $F \gets \texttt{RuleFind}(\boldsymbol{\mu}, A, S, B)$ \Comment{a set of lists of embeddings, with each embedding corresponding to a test example}
\State \textbf{Return}: the partition of test dataset induced by $F$
\EndProcedure
\end{algorithmic}
\end{algorithm}

\section{DcBench}
\label{appendix:dc}

\textbf{Overview.} ~\citet{eyuboglu2022domino} formalized the slice discovery problem and introduced Dcbench~\citep{eyuboglu2022dcbench}, a benchmark, consisting of pre-trained models across a variety of data sets for testing any new SDM. For each dataset in the benchmark, a manipulation is applied to the dataset in order to induced one of three kinds of errors in a collection of samples---slice---, and a model is trained on the modified dataset to exhibit the injected error. To assess a new SDM, the partitioning returned by the method is then compared to the ground-truth slices used to generate the model. The collection of datasets in dcbench includes natural images (ImageNet, and CelebA), Medical images based on the MIMIC Chest X-Ray, and Medical Time-Series Data of electroencephalography (EEG) signals. A collection of 1235 models were trained for various dataset, model, and error type groups. In addition to the benchmark, ~\citet{eyuboglu2022domino} introduced Domino, an SDM, that uses a mixture model that models the generative process of a slice. The input to the mixture model is a multi-model embedding (CLIP or ConVIRT) of the test data, and the mixture model assumes that the embedding, label, and predictions are independent conditioned on the slice. ~\citet{eyuboglu2022domino}  demonstrate that Domino outperforms or matches approaches based on last-layer or vision-transformer-based embeddings. Consequently, we compare against Domino on dcbench since it is, as of the writing of this paper, the state-of-the-art.

\textbf{Experimental Procedure.} The dcbench benchmark consists of a collection of slice discovery tasks. Each task includes a series of artifacts: the model, base dataset, model activations, validation and test predictions, clip activations for the test and validation data, and ground-truth slices. Domino was evaluated using two kinds of models: a synthetic and trained model. We restrict our focus to the trained models, only, since our proposed approach only applies to trained models. We compute the precision-at-10 metric, similar to domino, across all data modalities and error type. 

\textbf{Results.} We present a comparison of the Precision-at-10 metric between Domino and the influence embeddings approach in Table~\ref{tab:dominotrainedmodel}. The influence embedding approach outperforms domino across all settings except on the Noisy Label task for the EEG data modality.

\subsection{Datasets}
\label{appendix:dc:datasets}
All of the dcbench dataset, models, and task information is publicly available via the opensource repository: \url{https://github.com/data-centric-ai/dcbench}

\section{SpotCheck}
\label{appendix:sc:exp}
\textbf{Model}: Following~\citet{plumb2022evaluating}, we train a ResNet-18 model for each dataset and blindspot specification.

\subsection{Datasets}
\label{appendix:sc:datasets}
The Spotcheck benchmark is publicly described in the paper by ~\citet{plumb2022evaluating}. The SpotCheck
benchmark (Plumb et al., 2022) is based on a synthetic task
consisting of 3 semantic features that can be easily con-
trolled to determine the number of ‘blindspots’ in a dataset.
A blindspot is feature responsible for a model’s mistake.
Similar to dcbench, a model trained on data generated
from SpotCheck is induced to make mistakes on an input
that has a set of blindspots. The overall task is to predict
the presence of a square in the image. Attributes of the
input image such as the background, object color, and co-
occurrence can be varied to induce mistakes. Plumb et al.
(2022) find that current SDM approaches struggle in the
presence of several blindspots. Consequently, they pro-
pose PlaneSpot, an SDM that uses the representation of the
model’s last layer, projected to 2 dimensions, along with the
model’s ‘confidence’ as part of a mixture model to partition
the dataset. They show that PlaneSpot outperforms current
approaches as measured by discovery and false discovery
rates. We replicate the SpotCheck benchmark (for 4 variable features) and compare
InfEmbed to PlaneSpot.

\section{Imagenet \& High Dimensional Settings}
\label{appendix:ix}

\textbf{Overview \& Experimental Procedure:} The Imagenet validation data \cite{deng2009imagenet} contains 1000 classes, with 50 examples per class.  We use the $\texttt{InfEmbed-Rule}$ procedure to find slices with at most 40\% accuracy, and at least 25 examples.

We use a Resnet18 trained on Imagenet, provided in the torchvision PyTorch library.  The model achieves 69.8\% accuracy.  To compute influence embeddings, we consider gradients in the ``fc'' and ``layer4'' layers, which contain a total of ~8.9M parameters.  As in all experiments, we compute influence embeddings of $D=100$ dimensions using an Arnoldi dimension of $P=500$.  To explain the root-cause of predictions in a given slice, we compute the strongest opponents of the slice.  These are the training examples whose influence on the \emph{slice}, i.e. influence on the \emph{total} loss over all examples in the slice is the most negative, i.e. most harmful.  Due to the properties of influence embeddings, the influence of a training example on a slice is simply the dot-product between the influence embedding of the training example, and the \emph{sum} of the influence embeddings in the slice.  For simplicity, we search for opponents within the test dataset, i.e. treating the test data as the training data.  Since there is no distribution shift in Imagenet, for explanatory purposes this is an acceptable approximation.

\textbf{Results:}  Applying the above rule, we find 25 slices, comprising a total of 954 examples, whose overall accuracy is 34.9\%.  Figure \ref{fig:imagenetlong} displays 4 of the slices, where each row contains the distribution of predicted and true labels, the 4 examples nearest the slice center in influence embedding space, and the 4 strongest opponents of the slice. First, we see that the slices have the label homogeneity property as explained in Section \ref{sec:properties} - the predicted and true labels typically come from a small number of classes.  In the first slice (counting from the top), the true label is mostly ``sidewinder``, and often predicted to be ``horned viper''.  The slice's strongest opponents are horned vipers, hard examples of another class, which the model likely models similarly to sidewinders.  Thus the presence of the former drives the prediction of the latter towards horned vipers and away from sidewinders, increasing loss.  A similar story holds for the second slice, where breastplates are often predicted to be cuirasses.  Interestingly, the first opponent may actually be mis-labeled (cuirasses are breastplates fused with backplates, which it may lack), reflecting the fine line between hard and mis-labeled opponents examples.  The third and fourth slices are interesting in that each slice is predominantly of \emph{two} labels.  For example, in the third slice, border collies are strong opponents for both borzois and collies (which are a different dog breed than border collies), causing them to be mis-predicted as border collies.  This makes sense given that all three dog breeds look similar.  In the fourth slice, vine snakes and analog clocks are often mis-predicted.  A priori, we would not know that predictions for these seemingly unrelated classes would be wrong for the same reasons.  However, looking at the opponents, we see all of them have a clock hand, which turns out to be similar to a snake.  Also, because the opponents are of a variety of classes, the mis-predictions are also of a variety of classes.

\begin{figure}
\label{fig:imagenetlong}
\includegraphics[width=0.95\linewidth]{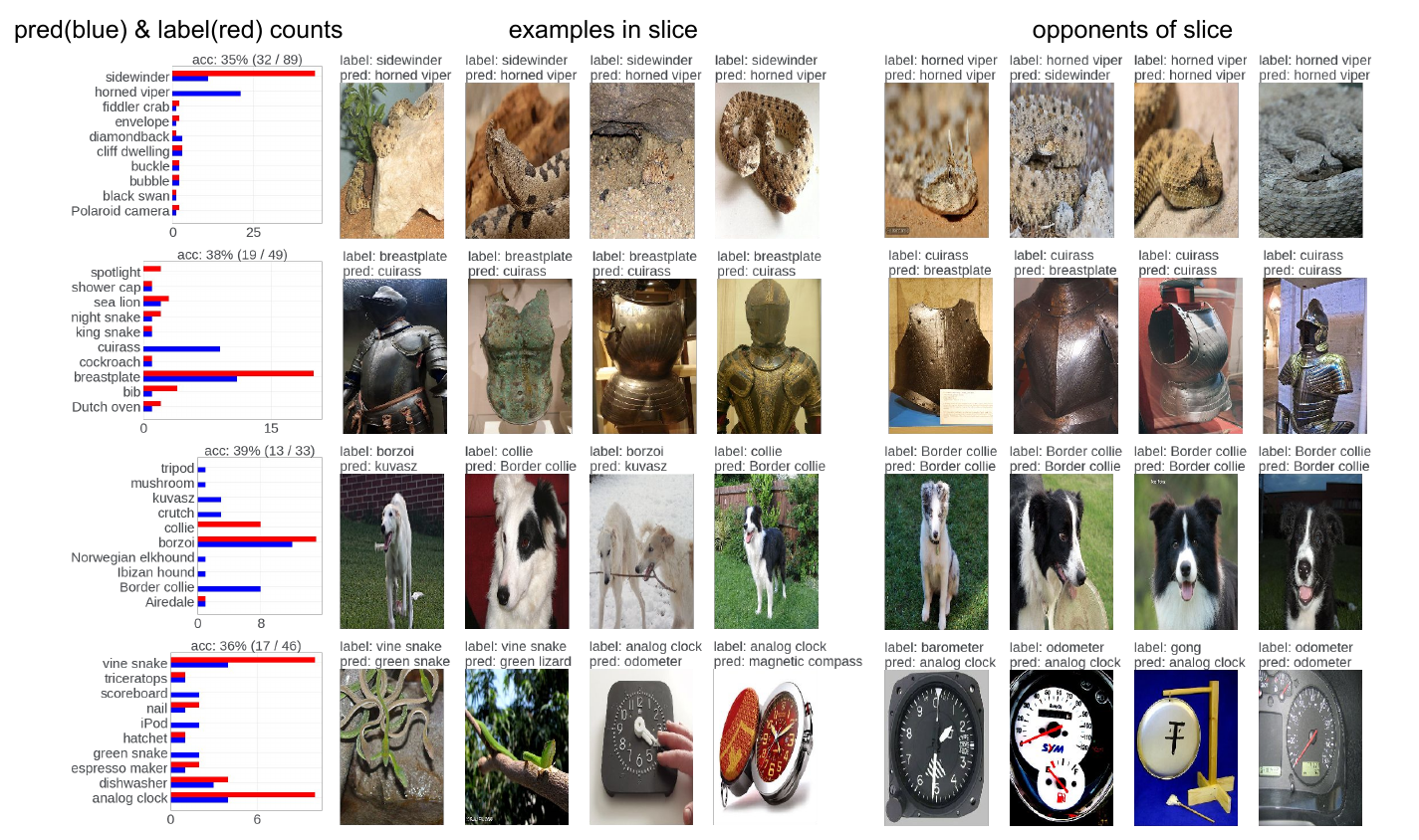}
\caption{For slices discovered by applying \texttt{InfEmbed-Rule} to Imagenet, we show the distributions of labels (red) and predictions (blue) in the slice (left), examples from the slice (middle), and opponents of the slice (right).}
\end{figure}

Finally, we also list a few other discovered slices: 1) a slice where ``gown'' (wedding dress) is mis-predicted to be ``groom'', whose opponents are images labeled ``groom'' containing both a groom and spuriously-correlated gown, 2) a slice where ``windsor tie'' is mis-predicted to be suit, due to similar spurious correlations, 3) a slice where mis-predicted ``sunglasses'' are due to examples which contain sunglasses, but are labeled as other present classes, like ``lipstick'' and ``bib'', 4) two different classes (i.e. index) actually refer to the same object - ``maillot``.

\section{AGNews \& Mis-labeled Data}
\label{appendix:agnews}

\textbf{Overview \& Experimental Procedure:} The AGnews test data \cite{NIPS2015_250cf8b5}
contains 4 classes (Business, Sci/Tech, Sports, World) with 1900 examples per class.  We use a BERT base model fine-tuned on the training set \footnote{https://huggingface.co/fabriceyhc/bert-base-uncased-ag\_news}, which achieves 93.75\% accuracy on the test data which results in 475 test errors.  We use the $\operatorname{InfEmbed}$ method of Algorithm \ref{alg:basic} with $K=25$ to find slices with at most 10\% accuracy, and at least 10 examples since the total number of errors is small.

To compute influence embeddings, we consider gradients in the ``bert.pooler.dense'' and ``classifier'' layers, which are the top 2 linear layers of the model and contain a total of 590K parameters parameters.  As in all experiments, we compute influence embeddings of $D=100$ dimensions using an Arnoldi dimension of $P=500$.  

\textbf{Results:}  We find 9 slices that account for 92\% of errors (438 out of 475).  Some interesting patterns we observe:
\begin{itemize}
    \item Sci/Tech examples predicted as Business (30\% ) \& Business predicted as Sci/Tech (25\%) are 55\% of errors.  
    \item Business examples predicted as World (10.3\%) \& World predicted as Business (9.4\%) are 19.7\% of errors.
    \item Sci/Tech examples predicted as World (8.5\%) \& World predicted as Sci/Tech (7.3\%) are 15.8\% of errors.
    \item Sports articles are least likely to get predicted for other genres and only a few Business and Sci/Tech examples get predicted as Sports ( 4.1\% and 2.7\% respectively of errors. )
\end{itemize}

Table \ref{tab:agnewst} displays representative examples (ie, those closet to the cluster centers ) from 3 of the slices.  For Sci/Tech news articles that are predicted as Business articles, we see Google, PalmOne and Intel all referenced, but with references to investment bank reports, Stock symbols and earning reports, the true class is a bit ambiguous and illustrates why the model has the most troubles distinguishing between the two ( they account for 55\% of errors ). In fact, it is possible examples in this slice are actually mis-labeled, showing that \texttt{InfEmbed} can be used to detect systematically mis-labeled data. For Business articles that are predicted to be World articles we see references to the grocer Tesco, Shell and cocoa farmers, but with reference to Britain, the Nigerian Senate and protests in the Ivory coast which again makes their class a bit ambiguous; accounting for ~20\% of model errors.  Finally, for Business examples that are predicted as Sports, which is one of the smaller error slices we find, we see less ambiguous error instances which the model has trouble with.

Furthermore, we display in Figure \ref{fig:agnews_hist} the distribution of labels and predictions for the 9 slices, as well as their error rate.  We see that the slices possess the label homogeniety property.

\begin{figure}
\includegraphics[scale=0.60]{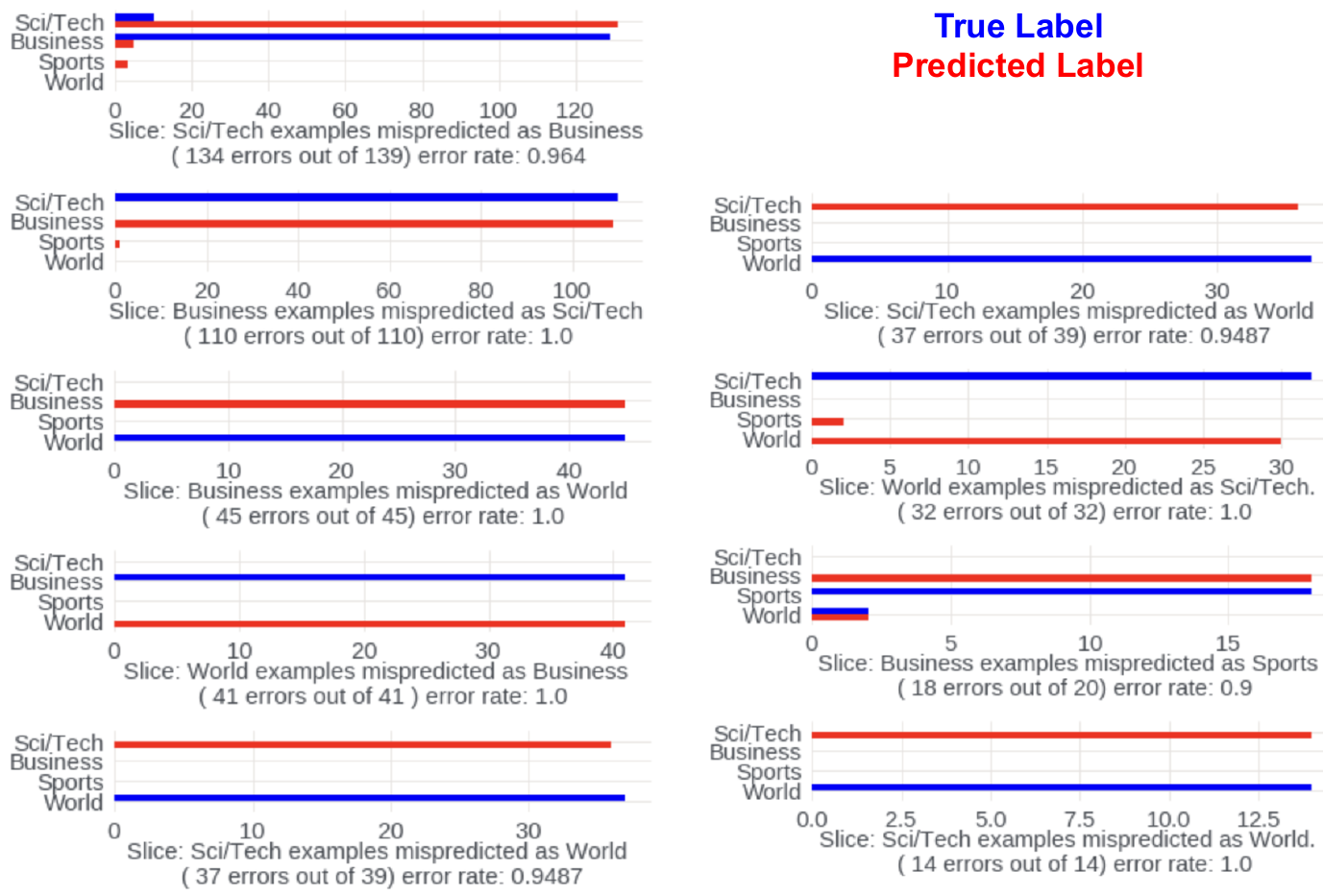}

\caption{Predicted vs True label distributions for 9 slices found for AGnews data}
\label{fig:agnews_hist}
\end{figure}

\begin{table}[t]
\centering
\begin{tabular}{p{0.95\linewidth}} 
{Examples from slice with 139 examples, 3\% accuracy.  Predictions are 93\% ``business'', labels are 94\% ``sci/tech''}\\
\hline
\hline
1. \emph{``Google Unveils Desktop Search, Takes on Microsoft Google Inc. (GOOG.O: Quote, Profile, Research) on Thursday rolled out a preliminary version of its new desktop search tool, making the first move against ...''} \\
\hline
2. \emph{``PalmOne to play with Windows Mobile? Rumors of Treo's using a Microsoft operating system have been circulating for more than three years. Now an investment bank reports that PalmOne will use a ...''}\\
\hline
3. \emph{``Intel Posts Higher Profit, Sales Computer-chip maker Intel Corp. said yesterday that earnings for its third quarter were \$1.9 billion -- up 15 percent from the same quarter a year ago ...''}
\\
\hline
\vspace{1pt}
{Examples from slice with 45 examples, 0\% accuracy.  Predictions are 100\% ``world'', labels are 100\% ``business''} \\
\hline
\hline
1. \emph{``British grocer Tesco sees group sales rise 12.0-percent (AFP) AFP - Tesco, Britain's biggest supermarket chain, said that group sales grew by 12.2 percent in the third quarter, driven by strong performances from its stores ...''}\\
\hline
2. \emph{``Nigerian Senate approves \$1.5 bln claim on Shell LAGOS - Nigeria's Senate has passed a resolution asking Shell's Nigerian unit to pay \$1.5 billion in compensation to oilfield communities for pollution, a Senate spokesman said. ''}\\
\hline
3. \emph{``Cocoa farmers issue strike threat. Unions are threatening a general strike in the Ivory Coast in a protest against the prices farmers are paid for their cocoa supplies.''}\\
\hline
\vspace{1pt}
{Examples from slice with 20 examples, 10\% accuracy.  Predictions are 90\% ``world'', labels are 90\% ``business''} \\
\hline
\hline
1. \emph{``Perry OKs money for APS as more accusations arise. The state's Adult Protective Services agency will get an emergency infusion of \$10 million to correct the kinds of problems that have arisen in El Paso.''} \\
\hline
2. \emph{Sign off, then sign in.  G. Michael Caggiano Jr. lies awake at night thinking about bank signs. He ponders them during breakfast, while brushing his teeth, and  quot;constantly quot; during the day, he says.''} \\
\hline
3. \emph{``Stanley set sights on Elland Road for casino Stanley Leisure plc has announced a Stanley Casinos Limited plan to develop a casino complex on land adjacent to Leeds United's Elland Road stadium.''} \\
\hline
\end{tabular}
\caption{Representative test examples for 3 high error slices obtained using \texttt{InfEmbed} on AGnews}
\label{tab:agnewst}
\end{table}

\section{Detecting Spurious Signals in Boneage Classification}
\label{appendix:rad}

\label{appendix:rad:datasets}

\textbf{Bone Age Dataset}: We consider the high stakes task of predicting the bone age category from a radiograph to one of five classes based on age: Infancy/Toddler, Pre-Puberty, Early/MiD Puberty, Late Puberty, and Post Puberty. This task is one that is routinely performed by radiologists and as been previously studied with a variety of DNN. The dataset we use is derived from the Pediatric Bone Age Machine learning challenge conducted by the radiological society of North America in 2017~\cite{halabi2019rsna}. The dataset consists of $12282$ training, $1425$ validation, and $200$ test samples. We resize all images to (299 by 299) grayscale images for model training. We note here that the training, validation, and test set splits correspond to similar splits used for the competition, so we retain this split.

 \textbf{Model and Hyperparameters}: We consider and a Resnet-50 model. The small DNN consists of: conv-relu-batchnorm-maxpooling operation successively, and two fully connected layers at the end. All convolutional kernels have stride $1$, and kernel size $5$. We train this model with SGD with momentum (set to $0.9$) and an initial learning rate of $0.01$. We use a learning rate scheduler that decays the learning rate every $10$ epochs by $\gamma=0.1$.

 \textbf{Overview \& Experimental Procedure}: We now artificially inject a signal that spuriously correlates with the label in training data, and see if \texttt{InfEmbed-rule} can detect the resulting test errors as well as the spurious correlation. Here, the classification problem is bone-age classification - to predict which of 5 age groups a person is in given their x-ray.  Following the experimental protocol of ~\citet{zhou2022feature} and ~\citet{adebayo2021post}, we consider 3 possible signals that can be added to an x-ray.  Importantly, spurious correlations involving these signal have been difficult to detect in past work.  For a given signal, we manipulate the bone-age training dataset by injecting the spurious signal into all \emph{training} examples from the ``mid-pub'' class (short for mid-puberty, one of the 5 age groups).  Importantly, we leave the test dataset untouched, so that we expect a model trained on the manipulated training data to err on ``mid-pub'' test examples; the spurious signal the model associated with ``mid-pub'' is missing, so that the model, having not associated other features with ``mid-pub'', finds no evidence to deem it ``mid-pub''.  This setup mimics a realistic scenario where x-ray training data comes from a hospital where one class is over-represented, and also contains a spurious signal (i.e. x-rays from the hospital might have the hospital's tag).  

We then train a resnet50 on the manipulated training data, and apply \texttt{InfEmbed-Rule} on the untouched test data to discovered slices with at least 25 examples and at most 40\% accuracy (same as for the Imagenet analysis).  To assess whether \texttt{InfEmbed-Rule} succeeded at detecting the model's reliance on the spurious training signal, we 1) take the most problematic slice, i.e. discovered slice with the lowest accuracy, and see if its labels are mostly ``mid-pub'' - the class we know a priori to be under-performing in the test dataset due to spurious training signal injection, and 2) examine whether the slice opponents are training examples with the spurious training signal, i.e. of the ``mid-pub'' class.  This lets us confirm whether the errors in the slice are due to the spurious training signal (as opposed to other root-causes), and whether using \texttt{InfEmbed-Rule} along with slice opponents analysis lets us discover the spurious training signal.  We repeat this manipulate-train-slice discovery procedure for 3 runs: once for each possible signal.

\textbf{Results}:  For all 3 runs / signals, \texttt{InfEmbed-Rule} returned 2 slices satisfying the rule.  In Figure \ref{fig:raddistproponents}, for each signal, we show the distribution of labels and predictions for the most problematic slice (left column) and the same distribution for that slice's top-50 opponents (right column).  We see that for each signal, the labels in the most problematic slice are of the ``mid-pub'' class, showing that \texttt{InfEmbed-Rule} is able to detect test errors due to the injected spurious training signal.  Furthermore, for each run, around 80\% of the slice's opponents are training examples with the spurious training signal, i.e. examples with the ``mid-pub'' label.  Together, this shows \texttt{InfEmbed-Rule} can not only detect slices that the model errs on because of a spurious training signal, but also identify the root-cause of those errors by using slice opponents to identify training examples with the spurious signal.

\section{Applying Influence Embeddings to Limited Data
Settings \& Using Attributes} \label{sec:covid}

We use the COVID-19 Chest X-Ray Dataset\footnote{ \url{https://github.com/ieee8023/covid-chestxray-dataset}}.
This dataset, provided in the torchxrayvision PyTorch library, has 535 images of chest X-rays. We use the COVID-19 label, a binary label, where there are 342 cases of COVID. 
We leverage a pretrained DenseNet model (densenet121-res224-all) provided in the torchxrayvision library, trained to predict multiple diseases, and adapt it to predict the COVID label by replacing the last fully-connected layer of the model.
We then fine-tune the model, including this last layer, on a training split of the COVID dataset using a binary cross-entropy loss. We follow standard normalization techniques for these datasets, including using a central crop of 224x224 pixels and normalizing the image values in the $[-1024, 1024]$ range.
The resulting model achieves an accuracy of 76.6\% on the test set of 107 points. We apply InfEmbed, taking gradients with respect to the last fully connected layer of the model. We then apply K-Means clustering with $k=3$. 

\textbf{Results.}
The first cluster had an accuracy of 85\%, capturing the majority of correctly classified samples. Besides correctly classified positive and negative samples, the first cluster also had five false positives and nine false negatives. The second cluster consisted of four samples that were false positives and two samples that were true negatives, while the third cluster had six samples that were false positives, and one sample that was a false negative.  
To a layperson not trained in interpreting radiology images, it may be hard to pick out differences between images in the three clusters. To study the differences between the three clusters, we inspected attributes and metadata provided in the dataset. 

Besides the COVID-19 label, the dataset had 18 additional labels for various diseases and infections, from Aspergillosis to Varicella. The full list can be found in Table \ref{tab:covid_labels}; see \cite{cohen2020covid} for more details. We computed the incidence of these diseases for each of the true negatives, true positives, false negatives, false positives samples by cluster. Table \ref{tab:covid_labels} presents the results. We observe that false positive samples in all three clusters tended to have pneumonia and tuberculosis, but not COVID-19. Looking at true positives, all the samples in cluster 1 that had COVID-19 also had pneumonia. However, samples with pneumonia do not always have COVID-19; rather, sometimes they have other lung diseases such as tuberculosis, SARS, etc. 

A differentiator between the three clusters is the incidence of additional diseases. Some false positive samples in cluster 2, in addition to pneumonia and tuberculosis, also had legionella, while those in cluster 3 in addition had SARS or pneumocystis. The false positive samples in cluster 1 had more diseases, such as Herpes, Klebsiella, and several others. Many of these diseases are lung diseases; these findings illustrate that the model may be having a harder time differentiating between different lung diseases and COVID-19.

\begin{table*}[ht!]
\centering
\resizebox{\textwidth}{!}{
\begin{tabular}{lrrr|rrr|rrr|rrr}
\toprule
{} & \multicolumn{3}{c}{\textbf{True Negatives}} & \multicolumn{3}{c}{\textbf{False Positives}} & \multicolumn{3}{c}{\textbf{False Negatives}} & \multicolumn{3}{c}{\textbf{True Positives}}\\
\cmidrule(lr){2-4}\cmidrule(lr){5-7}\cmidrule(lr){8-10}\cmidrule(lr){11-13}
{} &  C1  &  C2  &  \multicolumn{1}{c}{C3}  &  C1 &  C2 &  \multicolumn{1}{c}{C3} &  C1 &  C2 &  \multicolumn{1}{c}{C3} &  C1 &  C2 &  \multicolumn{1}{c}{C3} \\
\midrule
Aspergillosis  &   0.04 &    0.0 &    NaN &    0.0 &   0.00 &   0.00 &    0.0 &    NaN &    0.0 &    0.0 &    NaN &    NaN \\
COVID-19       &   0.00 &    0.0 &    NaN &    0.0 &   0.00 &   0.00 &    1.0 &    NaN &    1.0 &    1.0 &    NaN &    NaN \\
Chlamydophila  &   0.00 &    0.0 &    NaN &    0.0 &   0.00 &   0.00 &    0.0 &    NaN &    0.0 &    0.0 &    NaN &    NaN \\
H1N1           &   0.00 &    0.0 &    NaN &    0.0 &   0.00 &   0.00 &    0.0 &    NaN &    0.0 &    0.0 &    NaN &    NaN \\
Herpes         &   0.00 &    0.0 &    NaN &    0.2 &   0.00 &   0.00 &    0.0 &    NaN &    0.0 &    0.0 &    NaN &    NaN \\
Influenza      &   0.00 &    0.0 &    NaN &    0.0 &   0.00 &   0.00 &    0.0 &    NaN &    0.0 &    0.0 &    NaN &    NaN \\
Klebsiella     &   0.12 &    0.0 &    NaN &    0.2 &   0.00 &   0.00 &    0.0 &    NaN &    0.0 &    0.0 &    NaN &    NaN \\
Legionella     &   0.00 &    0.5 &    NaN &    0.0 &   0.25 &   0.00 &    0.0 &    NaN &    0.0 &    0.0 &    NaN &    NaN \\
MERS-CoV       &   0.00 &    0.0 &    NaN &    0.2 &   0.00 &   0.00 &    0.0 &    NaN &    0.0 &    0.0 &    NaN &    NaN \\
MRSA           &   0.00 &    0.0 &    NaN &    0.0 &   0.00 &   0.00 &    0.0 &    NaN &    0.0 &    0.0 &    NaN &    NaN \\
Mycoplasma     &   0.00 &    0.0 &    NaN &    0.0 &   0.00 &   0.00 &    0.0 &    NaN &    0.0 &    0.0 &    NaN &    NaN \\
Nocardia       &   0.04 &    0.0 &    NaN &    0.0 &   0.00 &   0.00 &    0.0 &    NaN &    0.0 &    0.0 &    NaN &    NaN \\
Pneumocystis   &   0.16 &    0.0 &    NaN &    0.2 &   0.00 &   0.33 &    0.0 &    NaN &    0.0 &    0.0 &    NaN &    NaN \\
Pneumonia      &   0.92 &    1.0 &    NaN &    1.0 &   0.25 &   1.00 &    1.0 &    NaN &    1.0 &    1.0 &    NaN &    NaN \\
SARS           &   0.12 &    0.0 &    NaN &    0.0 &   0.00 &   0.50 &    0.0 &    NaN &    0.0 &    0.0 &    NaN &    NaN \\
Staphylococcus &   0.00 &    0.0 &    NaN &    0.0 &   0.00 &   0.00 &    0.0 &    NaN &    0.0 &    0.0 &    NaN &    NaN \\
Streptococcus  &   0.04 &    0.0 &    NaN &    0.0 &   0.00 &   0.00 &    0.0 &    NaN &    0.0 &    0.0 &    NaN &    NaN \\
Tuberculosis   &   0.08 &    0.0 &    NaN &    0.0 &   0.25 &   0.00 &    0.0 &    NaN &    0.0 &    0.0 &    NaN &    NaN \\
Varicella      &   0.04 &    0.0 &    NaN &    0.0 &   0.00 &   0.00 &    0.0 &    NaN &    0.0 &    0.0 &    NaN &    NaN \\
\bottomrule
\end{tabular}}
\caption{Average characterization of the true negatives (tn), false positives (fp), false negatives (fn), and true positives (tp) of each of the 3 clusters C0, C1, and C2. We compute the characterization by averaging the 20-dimensional ground-truth label of each sample (that describes the pathologies associated with that sample) across all samples belonging in that group. For groups where no sample is assigned, the characterization is NaN.}
\label{tab:covid_labels}
\end{table*}

\end{document}